\documentclass[conference]{IEEEtran}

\usepackage{etoolbox}
\makeatletter
\patchcmd{\@makecaption}
  {\scshape}
  {}
  {}
  {}
\makeatletter
\patchcmd{\@makecaption}
  {\\}
  {.\ }
  {}
  {}
\makeatother

\usepackage{verbatim}
\usepackage{cite}
\usepackage{amsmath,amssymb,amsfonts}
\usepackage{graphicx}
\usepackage{multirow}
\usepackage{textcomp}
\usepackage{xcolor}
\usepackage{mathtools}
\usepackage{float}
\usepackage[utf8]{inputenc}
\usepackage{xparse}
\ifCLASSOPTIONcompsoc \usepackage[caption=false,font=normalsize,labelfont=sf,textfont=sf]{subfig}
\else
\usepackage[caption=false,font=footnotesize]{subfig}
\fi

\usepackage[ruled,lined, noend,linesnumbered]{algorithm2e}
\usepackage{algorithmicx}
\usepackage{titlesec}
\usepackage[acronym,toc]{glossaries}
\usepackage{hyperref}

\usepackage{amsthm}
\newtheorem{lemma}{Lemma}

\usepackage[table]{xcolor}
\usepackage{colortbl}

\definecolor{best}{RGB}{204,255,204}    % light green
\definecolor{worst}{RGB}{255,204,204}   % light red
\definecolor{mid}{RGB}{255,255,204}     % light yellow

\makeglossaries

\newglossaryentry{tula}{
  name={Tula},
  description={Optimizing Performance, Cost and Generalization in Large-Batch Training}
}

\newacronym{dnn}{DNN}{Deep Neural Networks}
\newacronym{hpc}{HPC}{High Performance Computing}
\newacronym{gpu}{GPU}{Graphic Processing Unit}
\newacronym{singpu}{GPU}{Graphic Processing Unit}
\newacronym{sgd}{SGD}{Stochastic Gradient Descent}
\newacronym{kde}{KDE}{Kernel Density Estimates}
\newacronym{vm}{VM}{virtual machines}
\newacronym{ags}{AGS}{Adaptive Gradient Scaling}
\newacronym{ddp}{DDP}{Distributed Data-Parallel}
\newacronym{lr}{LR}{learning-rate}
\newacronym{oom}{OOM}{out-of-memory}
\newacronym{lars}{LARS}{Layerwise adaptive rate scaling}

\def\BibTeX{{\rm B\kern-.05em{\sc i\kern-.025em b}\kern-.08em
    T\kern-.1667em\lower.7ex\hbox{E}\kern-.125emX}}
    
\graphicspath{{./}}
    
\begin{document}

	\title{Tula: Optimizing Time, Cost, and Generalization in Distributed Large-Batch Training}
	
	\author{
    \IEEEauthorblockN{Sahil Tyagi}
    \IEEEauthorblockA{
        Oak Ridge National Laboratory\\
        Oak Ridge, Tennessee, USA \\
        tyagis@ornl.gov}
    \and
    \IEEEauthorblockN{Feiyi Wang}
    \IEEEauthorblockA{
        Oak Ridge National Laboratory\\
        Oak Ridge, Tennessee, USA \\
        fwang2@ornl.gov}
}
	
	\maketitle
	
	\begin{abstract}
		\begin{comment}
Distributed training processes more batches per-iteration either by scaling-out (adding more nodes) or scaling-up (by increasing batch-size).
However, using the largest configuration may not always yield best results; horizontal scaling incurs additional communication cost while vertical scaling is limited by the computational overhead and device memory ceilings.
Thus, simply scaling the batch-size offers diminishing returns, where training time/cost decreases initially but eventually saturates (i.e., the knee-point of time/cost vs. batch-size pareto curve).
The optimal batch-size thus depends on the \acrshort{dnn}, training dataset and the available compute resources.
Additionally, the statistical performance of large-batches is worse than small-batches due to ``generalization gap''.
In this paper, we introduce \gls{tula}, an online, black-box service to optimize time, cost and convergence quality of large-batch training for vision-based convolutional models. It does so by developing parallel and statistical performance models to identify the optimal batch-size based on user objective.
In our evaluation, \gls{tula} predicts the trend in training time/cost within 7.5-14\% error-rate across multiple \acrshort{dnn}s.
Over a na\"ively chosen configuration, \gls{tula} achieves up to 20$\times$ speedup by identifying the optimal batch-size, while achieving $\approx$ 8.8\% more accuracy than vanilla large-batch training on average, thus improving generalizability.
\end{comment}
Distributed training increases the number of batches processed per iteration either by scaling-out (adding more nodes) or scaling-up (increasing the batch-size). However, the largest configuration does not necessarily yield the best performance.
Horizontal scaling introduces additional communication overhead, while vertical scaling is constrained by computation cost and device memory limits.
Thus, simply increasing the batch-size leads to diminishing returns: training time and cost decrease initially but eventually plateaus, creating a knee-point in the time/cost vs. batch-size pareto curve.
The optimal batch-size therefore depends on the underlying model, data and available compute resources.
Large batches also suffer from worse model quality due to the well-known ``generalization gap''.
In this paper, we present \gls{tula}, an online service that automatically optimizes time, cost, and convergence quality for large-batch training of convolutional models. It combines parallel-systems modeling with statistical performance prediction to identify the optimal batch-size.
%that best satisfies a user-specified objective. 
\gls{tula} predicts training time and cost within 7.5–14\% error across multiple models, and achieves up to 20$\times$ overall speedup and improves test accuracy by $\approx$ 9\% on average over standard large-batch training on various vision tasks, thus successfully mitigating the generalization gap and accelerating training at the same time.
	\end{abstract}
	
	\section{Introduction}\label{sec:intro}

Distributed processing is essential to develop accurate models trained over vast datasets~\cite{b0}.
In the age of big data and federated processing, training deep learning workloads spans a broad spectrum of hardware—from datacenter-scale \glspl{gpu} to edge and personal devices.
This ecosystem is highly heterogeneous, with compute~\cite{b5} and network capabilities~\cite{b3} varying significantly across edge~\cite{b72}, cloud, and \gls{hpc} environments.
Consequently, existing training techniques and systems are often unable to utilize/allocate resources efficiently for a given task.
Choosing an appropriate training and system configuration has a pronounced impact on both the parallel and statistical performance of neural networks~\cite{b6,b28}.

The pareto relationship between parallel and statistical efficiency is influenced by total nodes, hardware, batch-size, data and execution characteristics of the model~\cite{b8,b9,b28}.
A sub-optimal configuration may lead to extended runtimes, increased cost, or under-utilization of resources.
Unlike traditional distributed processing jobs, model training involves unique execution and synchronization behaviors, along with numerous tunable parameters that directly affect model quality, performance and efficiency.
Scaling efficiency is affected by network topology, latency, bandwidth, cluster-size, batch-size, model-size, and communication protocol~\cite{b3}, while statistical performance is influenced by communication frequency, data distribution~\cite{b42} and training parameters such as \gls{lr} and batch-size.
Large batch-sizes specially suffer from worse test performance due to ``generalization gap''~\cite{b12,b14}.

In this work, we present \gls{tula}, a black-box, online service that develops practical and principled techniques for performance and cost-aware training while mitigating the generalization gap associated with large-batches.
To enable systematic optimization, we construct an empirical performance model for large-batch distributed training: profiling-based compute analysis to estimate memory demands, coupled with a parallel model to predict compute and synchronization cost across different cluster and batch-sizes.
Beyond classical parallel scaling considerations, \gls{tula} incorporates fundamental characteristics of \gls{sgd} based optimization to address the generalization deficit of large-batches by scaling gradient updates.
\gls{tula} is tested with convolutional models across various vision tasks and compared with multiple baselines.
%We evaluate \gls{tula} over convolutional models across various vision tasks and compare with multiple baselines.
In general, we make the following contributions:
\begin{itemize}
\item Perform a comprehensive analysis of the trade-offs between parallel and statistical efficiency in large-batch distributed training.
\item Develop a practical performance model that estimates resource requirements, execution overhead, and end-to-end runtime cost for a given model, dataset, training hyperparameters, and compute resources.
\item Theoretically analyze the proposed gradient-scaling technique that projects large-batch updates to small-batch space, significantly improving model accuracy over vanilla \acrshort{sgd} and achieving similar or \emph{better} performance as other large-batch optimizers.
\item Evaluate multiple models to demonstrate \gls{tula}'s effectiveness in quantifying time and cost improvements over na\"ive configurations under diverse user objectives (e.g., minimizing time, cost, or a more balanced combination), and exploring multiple cost models for selecting the optimal batch-size in distributed training.
\end{itemize}

	\section{Background and Challenges}\label{sec:background}

\textit{Performance trade-offs in distributed training:} 
In this section, we characterize the performance trade-offs inherent in distributed training.
These observations inform the design of the parallel and statistical performance models in \S\ref{sec:design} to improve large-batch training efficiency.

%\vspace{0.1cm}
\subsection{Parallel Efficiency in Distributed Training}\label{subsec:scaling}
Neural networks are trained by iteratively optimizing model parameters by minimizing a loss function using stochastic methods like \acrshort{sgd}.
This compute-intensive process is commonly accelerated via data parallelism where multiple nodes maintain a full model replica and compute gradients over a disjoint subset of the data, followed by synchronization and aggregation of updates.
For an $L$-smooth objective function $f : \mathbb{R}^{d} \rightarrow \mathbb{R}$ satisfying Equation~(\ref{eqn:Lsmooth}), \acrshort{sgd} on each worker produces an unbiased estimator of the true gradient over random samples $b$ with bounded variance $\sigma^{2}$ (Equation~(\ref{eqn:boundedVar})).
In synchronous \acrshort{sgd}, updates are averaged over $N$ nodes and applied as per Equation~(\ref{eqn:sgdupdate}) with convergence rate $\mathcal{O}(1/\sqrt{I})$ for non-convex stochastic optimization over $I$ iterations~\cite{b32}.
Updates are aggregated through a central parameter server or decentralized collective communication like all-reduce, each exhibiting distinct latency-bandwidth costs~\cite{b3}.

\begin{subequations}
	\begin{equation}
		f(x) \leq f(y) + \langle \nabla f(y), x-y \rangle + \dfrac{L}{2}||x-y||^{2} \;\; \forall \;\; x,y
		\label{eqn:Lsmooth}
	\end{equation}
	\begin{equation}
		\mathbb{E}[g^{(i)} | w^{(i)}] = \nabla f(w^{(i)}) \Rightarrow \mathbb{E}||g^{(i)} - \nabla f(w^{(i)})||^{2} \leq \sigma^{2} \; \forall \; i
		\label{eqn:boundedVar}
	\end{equation}
	\begin{equation}
		w^{(i + 1)}_{(n)} = w^{(i)}_{(n)} - \eta \cdot \dfrac{1}{N} \sum_{n=1}^{N} (\frac{1}{|b|}\sum_{b_{(i)} \in \mathcal{B}^{(n)}_{(i)}} \nabla f(w^{(i)}_{(n)}, b_{(i)}))
		\label{eqn:sgdupdate}
	\end{equation}
	\label{eqn:sgdprereqs}
\end{subequations}

Distributed training can be scaled either \textit{vertically} or \textit{horizontally}.
We raise each worker's batch-size in vertical scaling, increasing the per-step compute time along with memory consumption due to additional forward passes and larger activation maps.
Horizontal scaling adds more nodes to scale-out training that may result in extra synchronization overhead~\cite{b19}.
With a fixed local batch-size $b$, the global batch ($B$ = $Nb$) increases proportional to cluster-size in the latter, while vertical scaling over a fixed $N$ can scale to larger $B$ by increasing the local batch-size.
These competing effects result in complex trade-offs between computation and communication efficiency.
%with the two scaling mechanisms (varying cluster or local batch-size).

\begin{figure*}
\centering
\subfloat[ResNet50]{\includegraphics[width=0.25\textwidth]{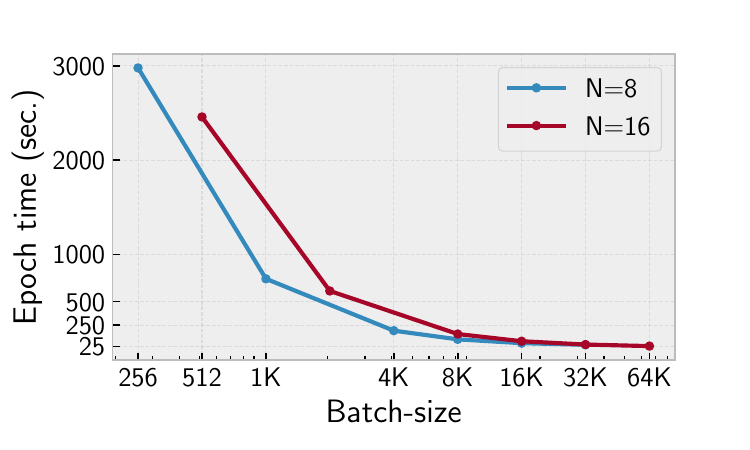}}
\subfloat[VGG11]{\includegraphics[width=0.25\textwidth]{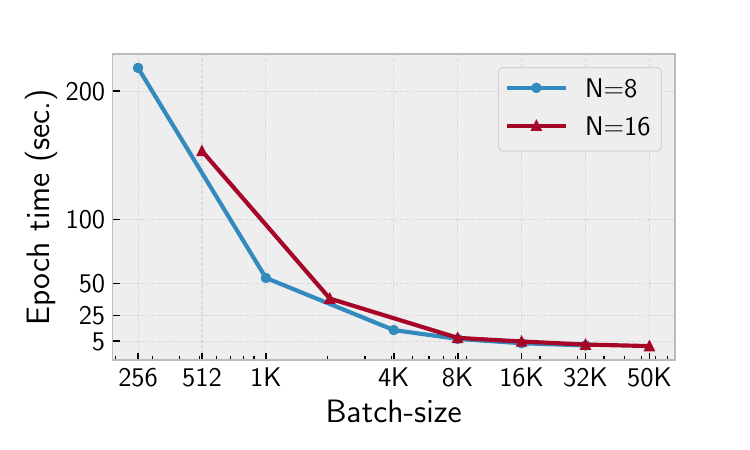}}
\subfloat[AlexNet]{\includegraphics[width=0.25\textwidth]{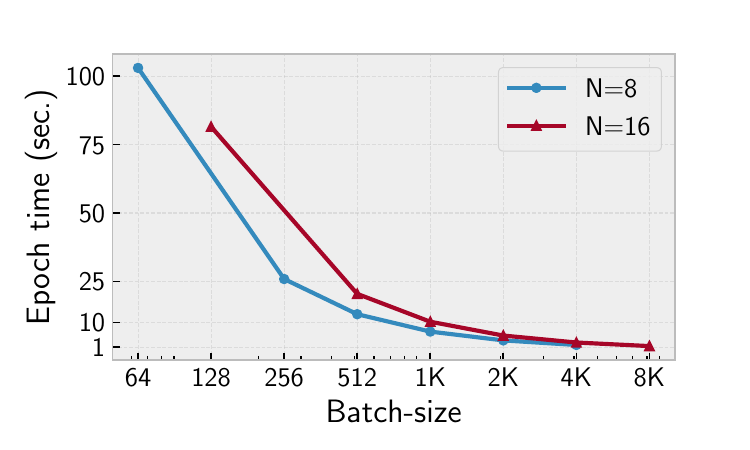}}
\subfloat[MobileNetv3]{\includegraphics[width=0.25\textwidth]{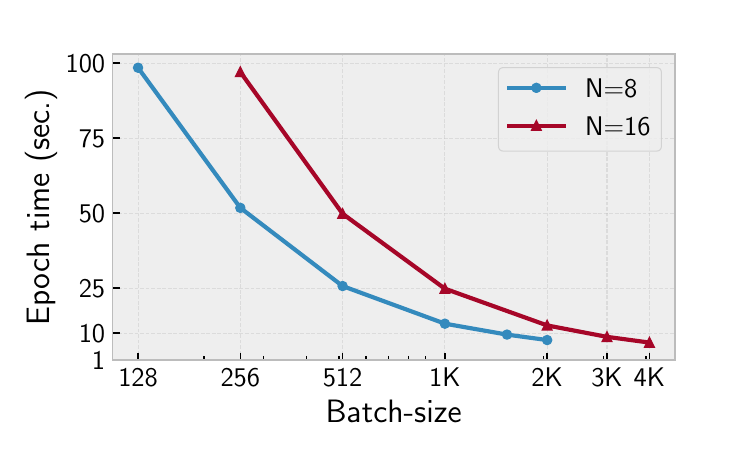}}
\caption{Epoch time of ResNet50 on ImageNet, VGG11 on CIFAR100, AlexNet on CalTech101, and MobileNetv3 on CalTech256 across two cluster-sizes.}
\label{fig:fig1EpochTime}
\end{figure*}

Figure~(\ref{fig:fig1EpochTime}) reports epoch completion time for different convolutional networks as the global batch-size is scaled over 8 and 16 V100 \gls{gpu}s.
An epoch requires $|D/B|$ steps for dataset $D$ and global batch $B$, so increasing the batch-size reduces the total steps needed to complete each epoch.
However, this increases the per-iteration cost (compute in vertical and communication in horizontal scaling), resulting in diminishing performance gains.
Across all models, epoch time decreases sharply initially but gradually plateaus beyond a model-dependent threshold (e.g., 8192 for ResNet50~\cite{b24}, 4096 in VGG11~\cite{b25}, 2048 for AlexNet~\cite{b26} and MobileNetv3~\cite{b27}).
For a fixed cluster size $N$, the epoch time decreases as batch size increases, since fewer iterations are required to process the entire dataset, despite the higher per-iteration computational cost of larger local batches. However, as the global batch size continues to grow (toward the right end of the x-axis), intra-device parallelism reaches its limit, causing the epoch completion time to plateau.
For the same batch-size, fewer workers at $N$=8 incur lower communication cost and thus achieve faster epoch times.
For e.g., MobileNetv3 at 1K completed an epoch in \emph{more than twice the time} with 16 nodes compared to a cluster of 8, so reduced synchronization cost outweighed the increased per-device compute cost here.
%For e.g., MobileNetv3 at 1K completed an epoch in 12 and 25 seconds with 8 and 16 nodes respectively, so reduced synchronization cost outweighed the increased per-device compute cost here.
However, as batch-size rises, memory constraints and reduced intra-worker parallel efficiency limit scalability, causing epoch times to converge across clusters.
Under weak scaling, local batch-size is fixed and global batch rises with cluster-size, resulting in fewer steps per-epoch and improving performance despite extra communication.
For e.g., ResNet50 with local batch-size 32 completes an epoch $\approx$ $3\times10^{3}$ seconds with 8 nodes and about $2.4\times10^{3}$ seconds with 16, about 25\% faster over the larger cluster (both thus using the same global batch-size of 256).

\emph{\textbf{Takeaway:} Distributed training exhibits non-linear scalability and diminishing returns from the interplay between computation, communication, and memory constraints.}

\subsection{Statistical Efficiency in Distributed Training}\label{subsec:bggengap}
\begin{figure*}
\centering
\subfloat[ResNet50]{\includegraphics[width=0.25\textwidth]{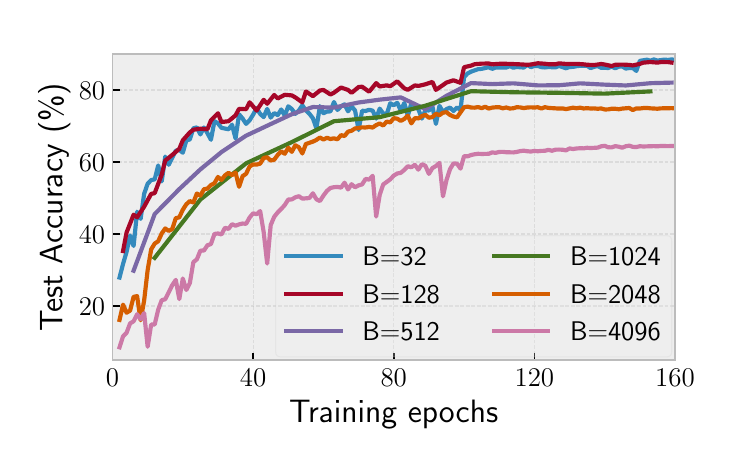}}
\subfloat[VGG11]{\includegraphics[width=0.25\textwidth]{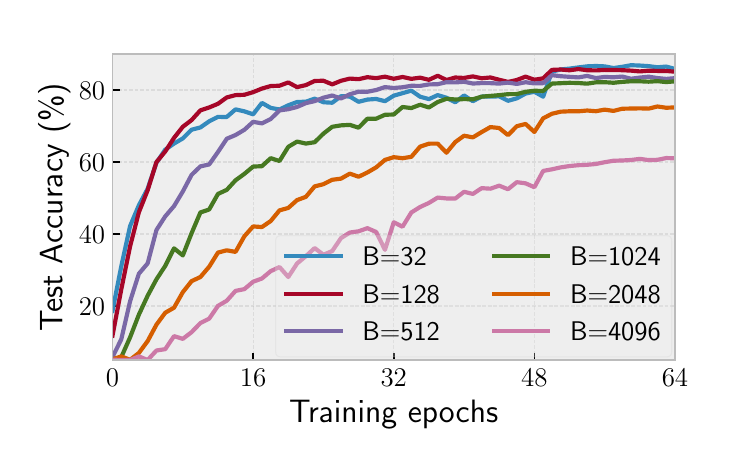}}
\subfloat[AlexNet]{\includegraphics[width=0.25\textwidth]{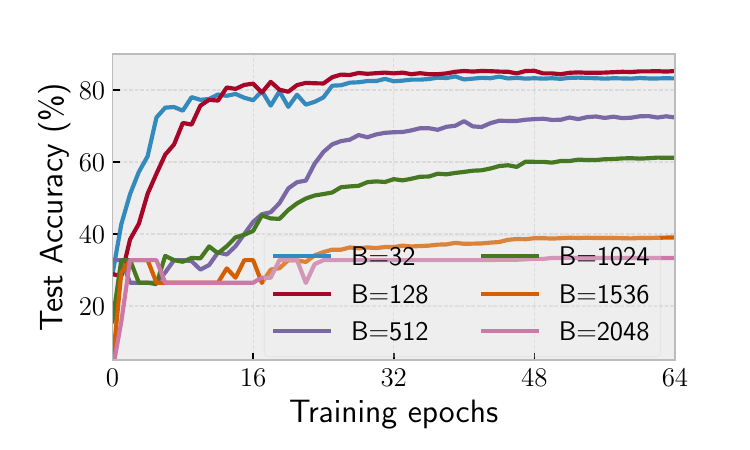}}
\subfloat[MobileNetv3]{\includegraphics[width=0.25\textwidth]{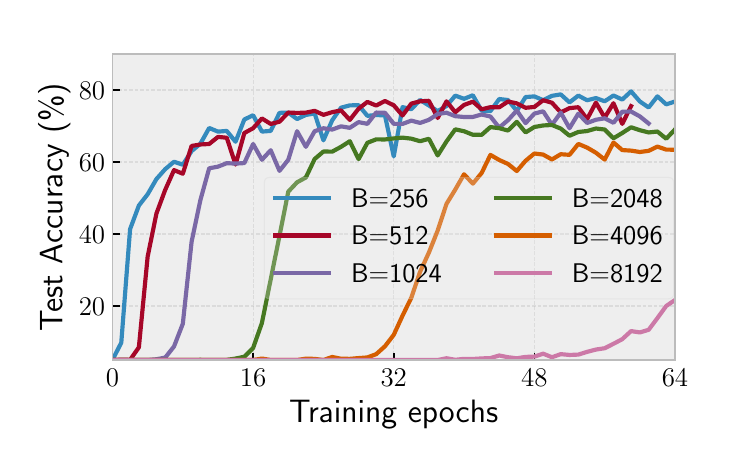}}
\caption{Test accuracy vs. batch-size. Smaller batches achieve better test accuracy than large-batch training, illustrating generalization gap in the latter.}
\label{fig2GenGapLargeBatch}
\end{figure*}

Although large-batch training can significantly improve throughput and achieve high training accuracy, it often suffers from degraded statistical performance on held-out data, commonly referred to as generalization gap~\cite{b14,b34,b35}.
Prior studies attribute this phenomenon to tendency of large-batches to converge to sharper minima in the optimization landscape~\cite{b12}, whereas smaller batches introduce stochasticity that biases optimization toward flatter minima with a narrower Hessian spectrum~\cite{b36}, empirically associated with improved generalization.
We see this from the test accuracy of multiple models in Figure~(\ref{fig2GenGapLargeBatch}) by varying the global batch $B$ for a fixed cluster-size.
Across all models, generalization performance consistently declines as the training batch-size increases.
The best accuracy of ResNet50 was attained at batch-size 32 and 128, albeit with lower training throughput due to limited parallelism.
Increasing $B$ to 4096 results in progressively worse accuracy.
VGG11 observed a similar trend in test performance for the same batches.
AlexNet exhibited even greater sensitivity to large-batch training, where accuracy dropped from 82\% to 37\% as batch-size was scaled from 128 to 2048.
Similarly, MobileNetv3 degraded from 80\% to 20\% accuracy as we increased the batch-size from 256 to 8192.
Beyond the batches shown in the figure, all evaluated models experienced severe generalization loss, rendering further batch scaling impractical.

\emph{\textbf{Takeaway:} Large batches improve parallel efficiency, they can substantially degrade convergence quality.
Consequently, any system that optimizes distributed training performance must jointly account for both parallel and statistical efficiency.}

	\section{Design}\label{sec:design}

\begin{figure}[t]
    \centering
    \includegraphics[width=\columnwidth]{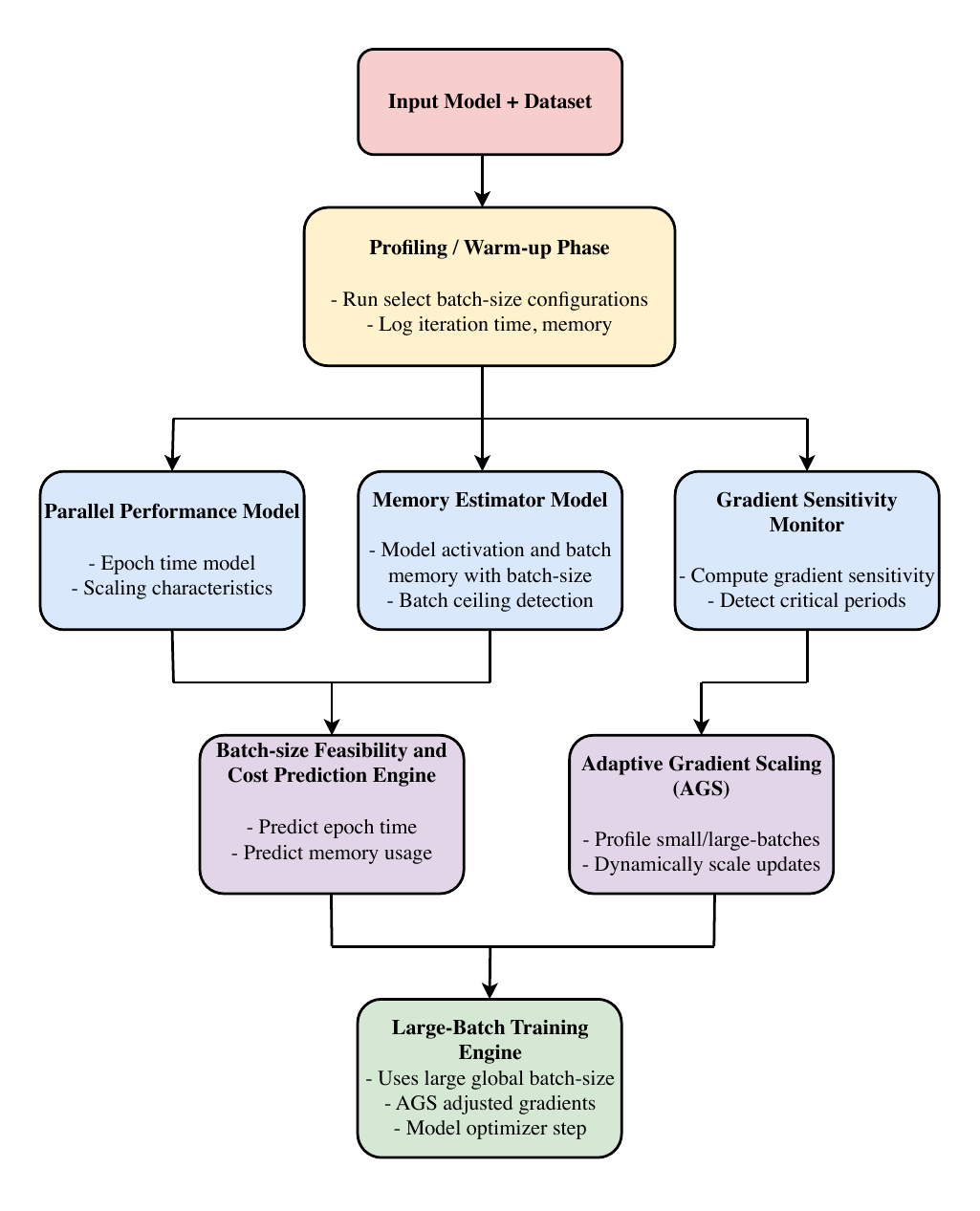}
    \caption{A schematic overview of Tula's workflow.}
    \label{fig:tuladesign}
\end{figure}

Parallel training across distributed nodes yields substantial speedups, albeit with diminishing returns as we scale further.
The optimal configuration depends on multiple system and workload characteristics, including network topology, latency and bandwidth, model-size, cluster-size, batch-size, and hardware resources.
Even with a systems-optimal large-batch configuration, distributed training still suffers from statistical inefficiency due to generalization gap.
\gls{tula} jointly optimizes parallel and statistical efficiency by first identifying the optimal batch and cluster-size configuration that minimizes training time, cost or the knee-point of the trade-off curves, while simultaneously improving large-batch generalization.
The overall logical flow of Tula is illustrated in Figure~(\ref{fig:tuladesign}), with each block described in the following sections.

\subsection{Parallel Performance Model}\label{subsec:designparperf}
From a computational standpoint, \gls{sgd}-based optimizations are iterative and repetitive, with each iteration $i$ loading, processing and moving $b$ mini-batch of data, compute loss and gradients, then aggregate and apply updates.
Consequently, the per-step time $t_{step}$ can be decomposed into computation~\cite{b15} and communication time in synchronous training:

\begin{subequations}
	\begin{equation}
		t_{step}^{(i)} = t_{compute}^{(i)} + t_{sync}^{(i)}
		\label{eqn:itrtime}
	\end{equation}
	\begin{equation}
		t_{comp} \propto b
		\label{eqn:computecost}
	\end{equation}
	\begin{equation}
		t_{sync} \propto 
		\begin{cases}
			N, \;\; \text{centralized parameter server} \\
			(1 - \dfrac{1}{N}), \;\; \text{ring-allreduce}
		\end{cases}
		\label{eqn:synccost}
	\end{equation}
	\label{eqn:paralperfmodel}
\end{subequations}

From \S\ref{subsec:scaling}, scaling vertically incurs extra $t_{compute}$ due to larger local batches, while horizontal scaling affects $t_{sync}$ due to additional nodes.
For data-parallel training in modern cloud and data-centers, communication cost is dominated by bandwidth rather than latency~\cite{b3,b19}.
Centralized parameter servers incurs bandwidth cost that scales linearly with the number of nodes, while decentralized collectives like ring-allreduce are bandwidth-optimal.
Although other collectives (e.g., tree-allreduce, reduce-scatter-allgather, etc.) have different latency–bandwidth trade-offs, \gls{tula} accommodates by incorporating its respective synchronization models into $t_{sync}$.

\begin{equation}
	M_{total} = M_{param} + M_{grad} + M_{opt} + M_{act} + M_{batch}
	\label{eqn:DNNMemory}
\end{equation}

\subsubsection{Memory Estimation Model:}
While a large batch-size takes fewer iterations to complete training epochs and improve throughput, the largest feasible batch-size is constrained by the available memory.
The overall memory footprint of convolutional networks is approximated in Equation~(\ref{eqn:DNNMemory}).
Here, the parameter, gradient and optimizer-state memory, $M_{param}$, $M_{grad}$ and $M_{opt}$, are largely static for a given model and precision format~\cite{b45,b46}.
In contrast, activation memory $M_{act}$ and batch memory $M_{batch}$ rise with training batch-size and often become the limiting factor.
Although intermediate checkpointing lowers the activation memory cost, it incurs additional compute cost~\cite{b46}.

\begin{figure}
	\centering
	\subfloat[Activation memory]{\includegraphics[width=0.25\textwidth]{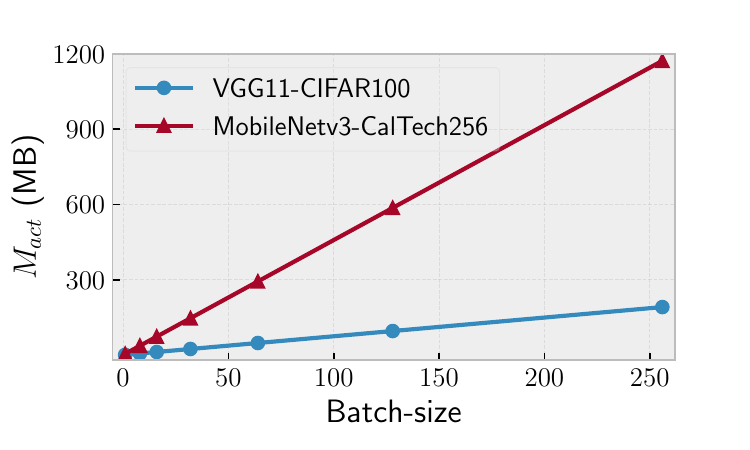}
	\label{fig6actmem}}
	\subfloat[Batch memory]{\includegraphics[width=0.25\textwidth]{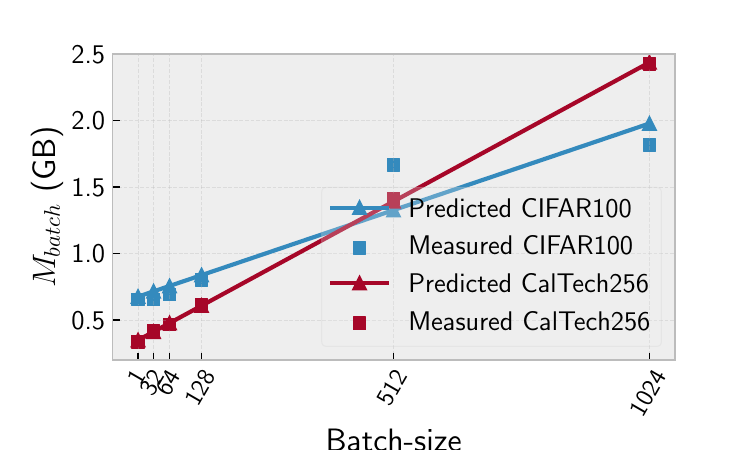}
	\label{fig6batchmem}}
	\caption{(a) $M_{act} \propto$ batch-size. (b) Linear model to predict $M_{batch}$.} 
	\label{fig6mempred}
\end{figure}

From Figure~(\ref{fig6mempred}), $M_{act}$ scales linearly with the batch-size for representative models, and $M_{batch}$ can be accurately approximated using a simple linear model for different datasets.
The scatter points in Figure~(\ref{fig6batchmem}) represents the actual dataloader memory for CIFAR100 and CalTech256 datasets at various batches, while the line plot signifies the estimated memory predicted using a linear model with respect to batch-size.
%Please note the sharp rise in batch-memory from 128 to 512 is still near-linear, batch-size quadruples here 
By leveraging Equation~(\ref{eqn:DNNMemory}) and lightweight profiling, \gls{tula} can estimate the maximum batch-size supported by a given model, dataset, and hardware, thus bounding the feasible search space for optimization.

\subsubsection{Performance and Cost Modeling:}
The objective of parallel performance model is to infer time and cost of training as a function of cluster and batch-size to estimate trade-off curves like Figure~(\ref{fig:fig1EpochTime}).
%Using the memory model, \gls{tula} bounds the search space of feasible batches between $(b_{min}, b_{max})$.
%It then profiles a subset of candidate configurations briefly to fit computation and communication models based on Equation~(\ref{eqn:paralperfmodel}).
With a user-specified minimum batch-size and a memory model inferred upper bound, $(b_{min}, b_{max})$, \gls{tula} profiles a subset of candidate configurations briefly to fit computation and communication models of Equation~(\ref{eqn:paralperfmodel}).

\begin{subequations}
	\begin{equation}
		T = I \cdot \widetilde{t}_{step} \;\; \Rightarrow \;\; T = \dfrac{ED}{B} \cdot \widetilde{t}_{step}
		\label{eqn:traintimeEst}
	\end{equation}
	\begin{equation}
		C = T \cdot N \cdot \tilde{p}
		\label{eqn:trainCostEst}
	\end{equation}
	\label{eqn:trainCostTime}
\end{subequations}

Total training time is estimated by Equation~(\ref{eqn:traintimeEst}), where $E$ is the total epochs for $I$ steps with dataset-size $D$ and average iteration time $\widetilde{t}_{step}$.
The total cost with a static pricing model (common in the cloud) is given by Equation~(\ref{eqn:trainCostEst}), where $\tilde{p}$ denotes the per-node price with $N$ nodes in total.
\gls{tula} offers two search modes: \emph{full} or \emph{partial}, where the former profiles all possible $(N,B)$ while the latter minimizes exploration cost by profiling only the extremum configurations (min-max cluster and batch-size) and extrapolates via regression.
While partial-search substantially lowers profiling cost, it incurs estimation errors due to limited samples, a trade-off we evaluate in \S\ref{sec:eval}.

\subsubsection{Configuration Selection Policy:}
Given a job and available resources, \gls{tula} selects the batch and cluster-size configuration that best satisfies a user-specified objective like minimizing training time, cost, or selecting the trade-off curve's knee-point (identified using the Kneedle algorithm~\cite{b50}).
\emph{In \gls{tula}, the feasible search space is specified by user-defined constraints on the minimum batch and cluster-size, largest available cluster-size, combined with memory-based upper batch-size bounds}.
Extremely small batches or clusters fail to exploit available parallelism, while overly large batches exacerbate statistical inefficiency and memory pressure.
By explicitly modeling these trade-offs, \gls{tula} enables a principled, efficient configuration selection mechanism for large-batch training.

\subsection{Gradient Sensitivity in Distributed Training}\label{subsec:gradsensitive}

\begin{figure}
	%\centering
	 \hspace*{-1.5em}
	\subfloat[ResNet50 gradient $\ell_2$-norm]{\includegraphics[width=0.25\textwidth]{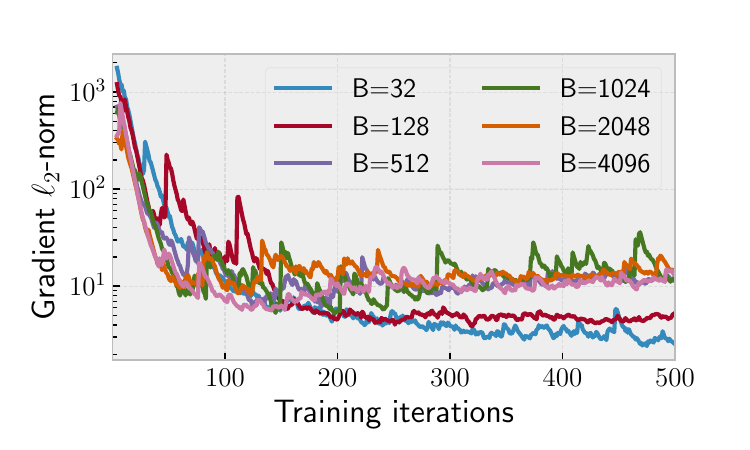}
	\label{fig3gradnormres50}}
	\subfloat[VGG11 second vs. first-order data]{\includegraphics[width=0.25\textwidth]{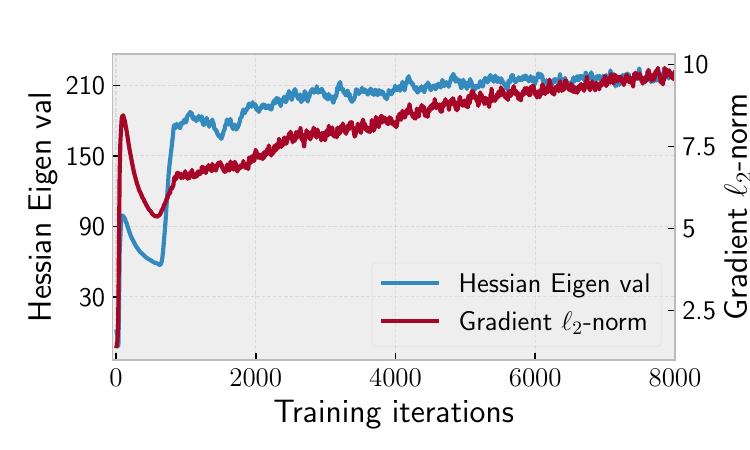}
	\label{fig4hessvgg11}}
	\caption{(a) Gradient $\ell_2$-norm in early iterations of ResNet50. (b) Largest Hessian eigenvalue and norm of the gradients over the iterations of VGG11.} 
	\label{fig3-4-hessianorm}
\end{figure}

The trajectory of model updates varies significantly throughout training due to the stochastic nature of gradient-based optimization.
Models are particularly sensitive during early training phases~\cite{b37} and during specific critical periods of convergence~\cite{b38}.
The degree of sensitivity depends on the model, dataset characteristics and optimization hyperparameters~\cite{b39}.
Consequently, not all training iterations contribute equally to convergence, motivating the need to identify and treat sensitive updates differently.
Figure~(\ref{fig3gradnormres50}) illustrates the evolution of gradient $\ell_2$-norm for various batches in ResNet50.
Initially, large-batch updates are smaller and less noisy as they more closely approximate the full-batch gradient, while smaller-batches exhibit higher variance and larger magnitudes.

As training progresses, this behavior reverses: large-batch gradients grow in magnitude and converge toward sharper minima, whereas small-batch gradients stabilize around flatter regions of the loss landscape.
Second-order information like eigenvalues of the Hessian matrix is an effective indicator of sensitive training phases~\cite{b40}, but computing it at every iteration is prohibitively expensive.
However, prior work shows that changes in Hessian eigenvalues can be approximated using first-order gradient statistics~\cite{b41}.
Figure~(\ref{fig4hessvgg11}) compares the largest Hessian eigenvalue and gradient norm in VGG11, demonstrating that abrupt changes in curvature are well reflected in first-order gradients, albeit at a significantly lower computational cost.

To quantify temporal sensitivity, we track changes in the gradient norm for batch-size $b$ over iteration $i$ as:
\begin{equation}
		\triangle(G^{(i)}_{(b)}) = \bigg| \frac{|G^{(i)}_{(b)}|^{2} - |G^{(i-1)}_{(b)}|^{2}}{|G^{(i-1)}_{(b)}|^{2}} \bigg|
		\label{eqn:relgradchange}
\end{equation}

\begin{figure}
    \hspace*{-1.5em}
    \subfloat[ResNet50]{\includegraphics[width=0.25\textwidth]{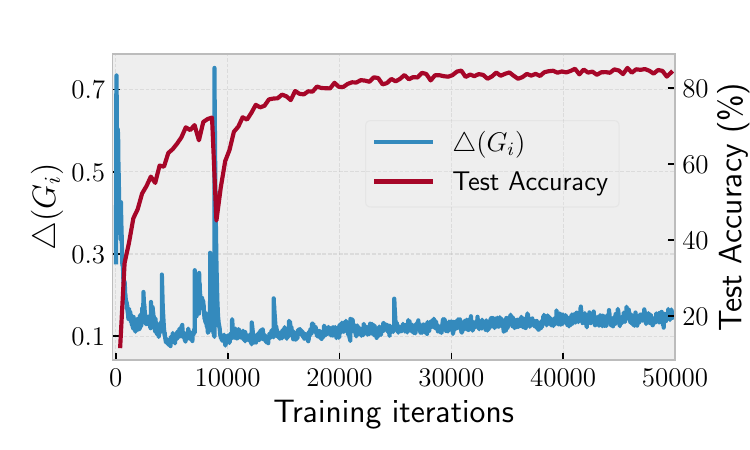}
    \label{fig5relgradres50}}
    \subfloat[VGG11]{\includegraphics[width=0.25\textwidth]{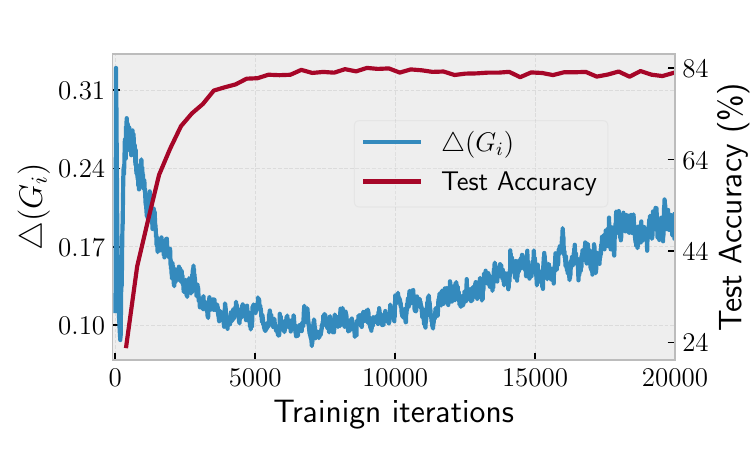}
    \label{fig5relgradvgg11}}
    \caption{Gradient variability and test accuracy are correlated such that model convergence path is mirrored in the trajectory of the rate of gradient change.}
    \label{fig5relgrad}
\end{figure}

The gradient variability metric $\triangle(G^{(i)}_{(b)})$ serves as an effective, low-cost solution to track sensitive and critical updates in an online manner during training~\cite{b41,b39}.
Figure~(\ref{fig5relgrad}) plots this metric alongside test accuracy for ResNet50 and VGG11.
In ResNet50, the rapid increase in accuracy during early training coincides with high variability in $\Delta(G^{(i)})$.
\gls{lr} decay around 9K steps induces abrupt accuracy changes that are simultaneously detected as sharp spikes in gradient change.
As training converges, both accuracy and $\Delta(G^{(i)})$ stabilize.
VGG11 exhibits a similar correlation, with steady improvements in accuracy mirrored by a gradual decline in gradient variability.
The degree of sensitivity may differ across \gls{dnn}s due to variations in architecture, depth, and dataset characteristics.
Nevertheless, $\Delta(G^{(i)})$ is able to identify critical phases within each training run by tracking relative changes in gradient magnitudes.

\emph{\textbf{Takeaway:} First-order gradient statistics give a lightweight and effective mechanism for detecting sensitive regions of the training trajectory.
Leveraging this insight, we design an adaptive gradient scaling mechanism that modulates large-batch updates based on training sensitivity, thereby mitigating the generalization gap without incurring significant computational overhead (described in \S\ref{subsec:designstatperf})}.

\begin{figure}
	\centering 
	\subfloat[ResNet50 accuracy]{\includegraphics[width=0.25\textwidth]{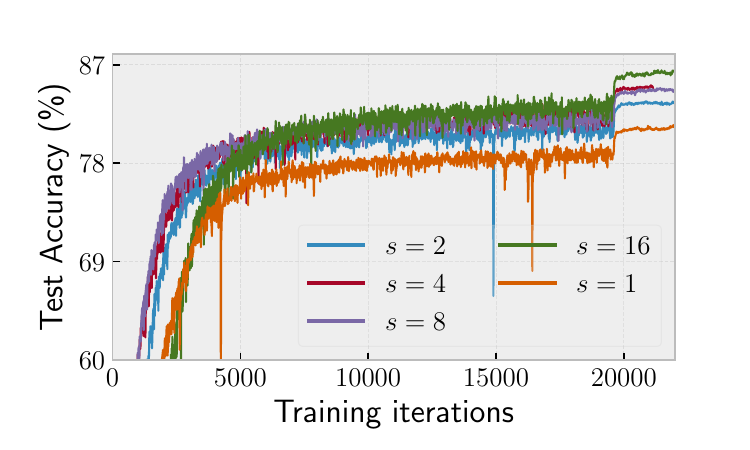}
	\label{fig:res50StepfnAcc}}
	\subfloat[ResNet50 \acrshort{kde}]{\includegraphics[width=0.25\textwidth]{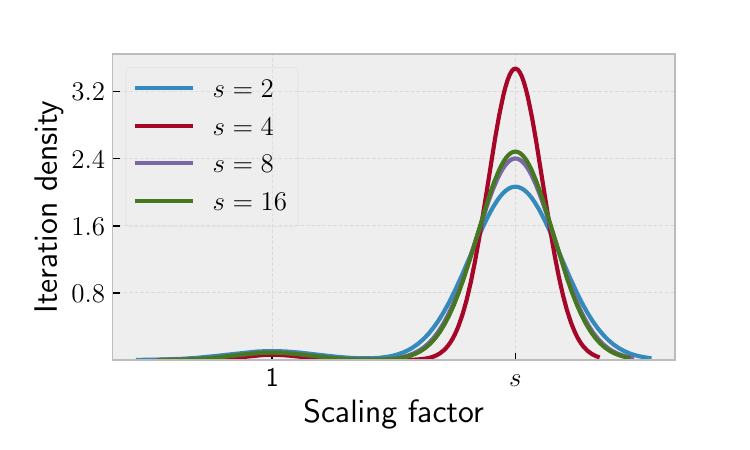}
	\label{fig:res50StepfnItrKDE}}
	\vspace{0.1cm}
	\subfloat[AlexNet accuracy]{\includegraphics[width=0.25\textwidth]{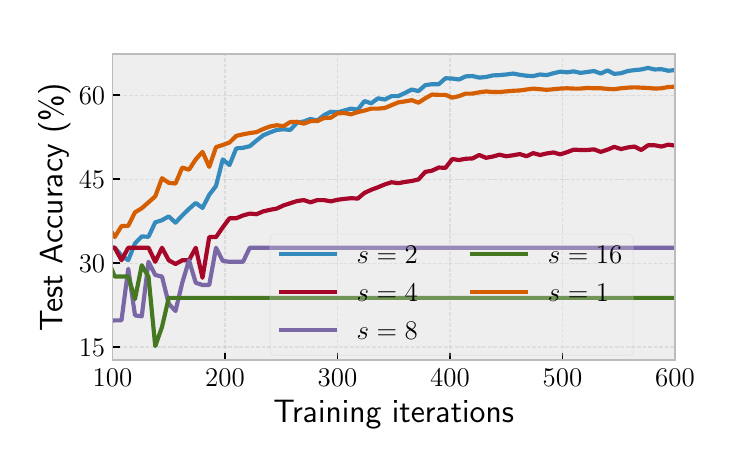}
	\label{fig:alexStepfnAcc}}
	\subfloat[AlexNet \acrshort{kde}]{\includegraphics[width=0.25\textwidth]{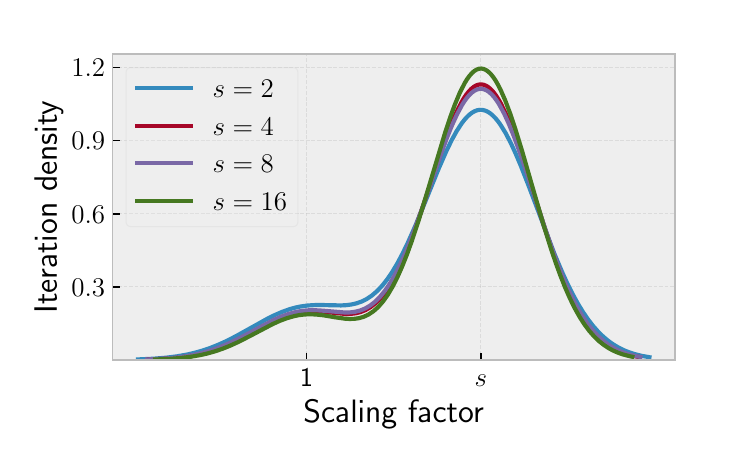}
	\label{fig:alexStepfnItrKDE}}
	\caption{Convergence quality and \acrfull{kde} of iterations for batch-size 1024 under static gradient scaling $\mathcal{U}(s)$ with $\delta=0.5$ and various scaling factors $s$.
Static scaling improves accuracy over vanilla large-batch training ($s=1$), but excessive scaling degrades convergence quality.}
	\label{fig:accuracyKDEStepFn}
\end{figure}

\subsection{Statistical Performance Model}\label{subsec:designstatperf}
At equivalent training stages, large-batch updates are generally smoother than small-batch gradients, especially during early training when optimization is most sensitive~\cite{b37}.
Gradients computed over large batches better approximate the true full-batch gradient~\cite{b8}, while small-batch updates exhibit higher variance.
While this noise can slow early convergence, it provides implicit regularization in later stages, guides models towards flatter minima and improves generalization.
In contrast, large-batch optimization often converges to sharp minima~\cite{b12}, resulting in overfitting and degraded test performance.
We model the discrepancy between small and large-batch updates at iteration $i$ as an adaptive noise term:

\begin{equation}
	G_{\tiny{b_{small}}}^{(i)} = G_{\tiny{b_{large}}}^{(i)} + \gamma^{(i)}(b_{small}, b_{large})
	\label{eqn:slgradnoiseterm}
\end{equation}

where $\gamma^{(i)}$ captures the variance induced by batch-size differences.
Rather than injecting fixed noise~\cite{b47}, we transform large-batch gradients through a mapping mechanism $\mathcal{M}(\cdot)$ that adapts to training sensitivity, as shown in Equation~(\ref{eqn:gradLSscale}).
$\mathcal{M}(\cdot)$ captures the spatio-temporal relationship between large and small-batch updates, effectively scaling large-batch gradients to approximate the regularization effect of smaller batches.
If applied judiciously, this can potentially improve convergence while retaining the efficiency of large-batch parallelization.

\begin{subequations}
	\begin{equation}
		G^{(i)}_{\tiny{b_{large}}} = \frac{1}{|b_{large}|}\nabla f(w^{(i)}, b_{large})
		\label{eqn:largeBgrad}
	\end{equation}
	\begin{equation}
		\widetilde{G}^{(i)} = \mathbf{\mathcal{M}}(G_{\tiny{b_{large}}}^{(i)}),
		\quad
		w^{(i+1)} = w^{(i)} - \eta \cdot \widetilde{G}^{(i)}
		\label{eqn:LSgradmap}
	\end{equation}
	\label{eqn:gradLSscale}
\end{subequations}

\subsubsection{Static Gradient Scaling:}
We first consider a static mapping that applies a step-wise scaling function based on gradient sensitivity, shown in Equation~(\ref{eqn:staticMapFn}).
Sensitive phases are detected using $\Delta(G^{(i)})$ from Equation~(\ref{eqn:relgradchange}) and setting a threshold $\delta$ such that updates are unscaled when $\Delta(G^{(i)})\ge\delta$ (i.e., in sensitive phases) and scaled otherwise.

\begin{subequations}
	\begin{equation}
		\mathcal{U}(s) = 
		\begin{cases}
			1, & \text{sensitive phase} \\
			s, & \text{otherwise}
		\end{cases}
		\label{eqn:stepUpFn}
	\end{equation}
	\begin{equation}
			\widetilde{G}^{(i)} = \mathcal{U}(s) \odot G_{b}^{(i)}
			\label{eqn:stepscaling}
		\end{equation}
	\label{eqn:staticMapFn}
\end{subequations}

Figure~(\ref{fig:accuracyKDEStepFn}) illustrates the statistical impact of static scaling when model updates are blindly scaled up by various $s\in[2,4,8,16]$ with $\delta$ = 0.5 and a base batch-size of 1024.
For ResNet50 in Figure~(\ref{fig:res50StepfnAcc}), convergence quality improved across all scaling factors (for any $s$ \scalebox{0.85}{\textgreater} 1) over vanilla large-batch training ($s$ = 1).
On the other hand, AlexNet saw improved accuracy over moderate scaling ($s$ = 2) but degraded performance from excess scaling ($s$ \scalebox{0.85}{\textgreater} 2).
Correspondingly, we plot the kernel density estimates (KDEs) of ResNet50 and AlexNet in Figures~(\ref{fig:res50StepfnItrKDE}) and (\ref{fig:alexStepfnItrKDE}), which quantifies the fraction of iterations that were subjected to scaling $s$ \scalebox{0.85}{\textgreater} 1 versus the portion of training iterations that used the original updates as is (i.e., $s$ = 1) based on the sensitivity of model updates.
The KDEs show how sensitivity-aware scaling preserves critical updates by switching the scaling factor between 1.0 and $s$ on a per-iteration basis.
Compared to ResNet50, AlexNet was more vulnerable to gradient sensitivity for every $s$, as seen from its KDE estimates where a considerable portion of updates default to the base scaling of 1.0.
This high degree of sensitivity translates to its convergence quality as well, where higher $s$ failed to outperform vanilla large-batch training ($s$ = 1).
In contrast, ResNet50 was more robust to gradient sensitivity across all $s$, and most iterations trained with the scaled factor $s$.
Due to its low sensitivity, ResNet50 converged successfully for every $s$ \scalebox{0.85}{\textgreater} 1 that we evaluated.

\begin{figure}
	\centering
	\subfloat[ResNet50 with $\delta$ = 0.8]{\includegraphics[width=0.25\textwidth]{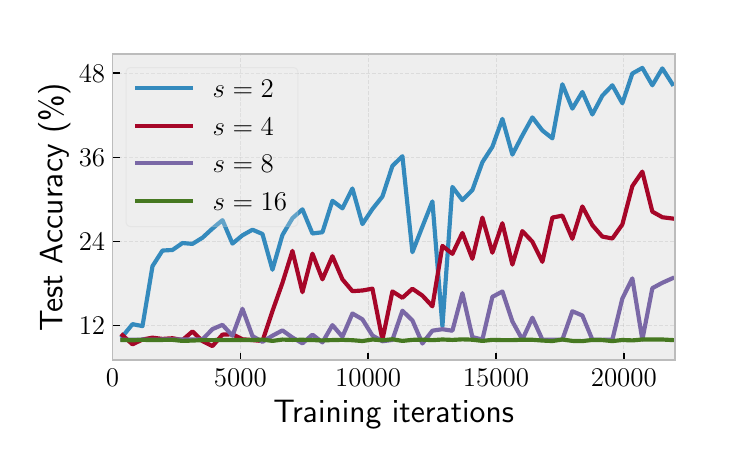}
	\label{fig8ablares50acc}}
	\subfloat[VGG11 with $\delta$ = 0.2]{\includegraphics[width=0.25\textwidth]{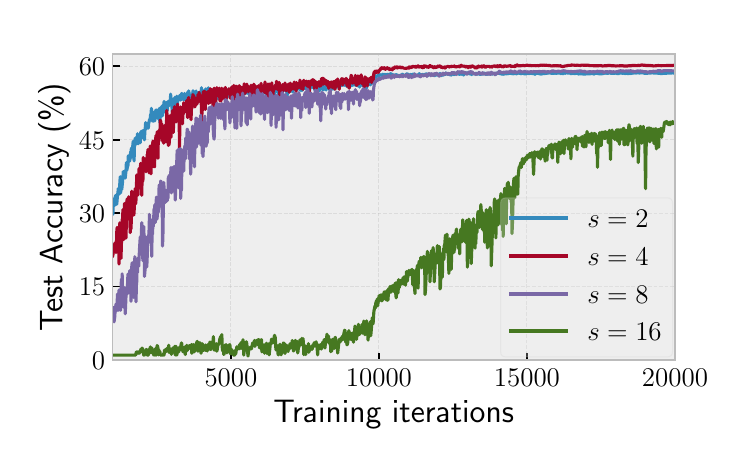}
	\label{fig8ablavgg11acc}}
	\caption{Impact of gradient variability threshold $\delta$.
	Larger $\delta$ biases training toward scaled updates, while a smaller $\delta$ favors unscaled (sensitive) updates.} 
	\label{fig:fig8ablastatic}
\end{figure}

\vspace{0.15cm}
\emph{Impact of gradient variability threshold:} Figure~(\ref{fig:fig8ablastatic}) plots the convergence curves for ResNet and VGG as we vary the threshold metric over multiple scaling factors.
The results show that $\delta$ significantly affects convergence, where a small $\delta$ is conservative and favors unscaled updates while a larger $\delta$ applies scaling more aggressively.
For ResNet50, increasing $\delta$ from 0.5 to 0.8 (Figure~(\ref{fig:res50StepfnAcc}) and (\ref{fig8ablares50acc})) substantially degrades test accuracy.
On the other hand, VGG11 benefits significantly with a smaller threshold of 0.2, where scaling factors 2, 4 and 8$\times$ achieved nearly the same convergence, while $s$ = 16 attained nearly 12\% lesser accuracy (albeit still better than ResNet50 at various $s$).
Empirically, $\delta$ = 0.5 gives a robust balance across all models.

\begin{algorithm}
\DontPrintSemicolon
\SetAlgoLined
\SetKwProg{Fn}{function}{}{}
\caption{\acrfull{ags}}
\label{algo1adapgrads}

\KwIn{Batches ($b_{small}$, $b_{large}$), threshold $\delta$, \gls{lr} $\eta$, \\ 
stability term $\epsilon$ = $10^{-8}$, $\textsl{gradscale} \leftarrow \mathbf{1}$ (per-parameter)}

\For{epoch $e = 1,2,\dots$}{
    \For{iteration $i = 1,2,\dots$}{
        $G^{(i)}_{b_{large}} \leftarrow \nabla f(w^{(i)}, b^{(i)}_{large})$\;
        Compute $\Delta(G^{(i)})$ using Eq.~(\ref{eqn:relgradchange})\;

        \eIf{$\Delta(G^{(i)}) < \delta$}{
            $\widetilde{G}^{(i)} \leftarrow \textsl{gradscale} \odot G^{(i)}_{b_{large}}$
        }{
            $\widetilde{G}^{(i)} \leftarrow G^{(i)}_{b_{large}}$
        }

        $w^{(i+1)} \leftarrow w^{(i)} - \eta \widetilde{G}^{(i)}$\;
    }
    $\textsl{gradscale} \leftarrow \textsc{UpdateGradScale}()$\;
}
\vspace{0.1cm}
\Fn{\textsc{UpdateGradScale}$()$}{
    $s_{max} \leftarrow \sqrt{b_{large} / b_{small}}$\;
    Sample $b^{(i)}_{small} \subset b^{(i)}_{large}$, $|b_{small}| < |b_{large}|$\;
    $G^{(i)}_{b_{small}} \leftarrow \nabla f(w^{(i)}, b^{(i)}_{small})$\;

    \ForEach{parameter $(g_{small}, g_{large})$}{
       $s \leftarrow \min\!\left(\left|\frac{g_{small}}{g_{large} + \;\epsilon}\right|, s_{max}\right)$\;
       \vspace{0.05cm}
        $\textsl{gradscale}.key(p) \leftarrow s$\;
    }
    \KwRet{$\textsl{gradscale}$}
}
\end{algorithm}

\subsubsection{\gls{ags}:}
While static gradient scaling improves accuracy in many cases, it remains sensitive to the choice of scaling factor and can degrade convergence when scaling is overly aggressive.
To address this limitation, we propose a profiling-based \acrfull{ags}, which dynamically transforms large-batch gradients to approximate small-batch behavior.
Unlike static scaling, \gls{ags} applies per-parameter scaling and adapts over the course of training.
As shown in Algorithm~(\ref{algo1adapgrads}), updates from large-batches are computed on every step, while training sensitivity is monitored using the gradient variability metric $\Delta(G^{(i)})$.
When training enters a sensitive phase ($\Delta(G^{(i)}) \ge \delta$), \gls{ags} preserves the original gradients.
Otherwise, updates are scaled using a learned per-parameter factor.
The scaling factors are updated periodically via lightweight profiling, illustrated in the function \textsc{UpdateGradScale}.
A small sub-batch is sampled from the large-batch, and its gradients are compared against large-batch gradients (while adding numerical stability term $\epsilon$) to estimate the relative magnitude per parameter.
This overhead is tolerable if it is executed infrequently; this cost can rise considerably when small and large-batch updates are compared more frequently.

To prevent instability due to stochastic noise, \gls{ags} bounds each scaling factor by $s_{max}$ (line 12) following prior heuristics~\cite{b48,b49}.
This constraint mitigates gradient explosion when the ratio of small to large-batch gradients becomes large.
Conversely, \gls{ags} can also reduce update magnitudes ($s<1$) when large-batch gradients exceeds small-batches— a behavior observed in the later training stages (seen in Figure~(\ref{fig3gradnormres50})).

\emph{Convergence properties}: Under an $L$-smooth loss function and bounded per-parameter scaling factors $s \in [s_{min}, s_{max}]$, \gls{ags} converges at a rate of $\mathcal{O}(1/I)$ for a fixed learning-rate $\eta$ \scalebox{0.85}{\textgreater} $2s_{min}/(Ls_{max}^{2})$, where $I$ is the total training iterations.
When the learning rate decays as $\eta \propto 1/\sqrt{I}$, \gls{ags} achieves the same $\mathcal{O}(1/\sqrt{I})$ convergence-rate as standard \gls{sgd}:

\begin{lemma}[Convergence in Adaptive Gradient Scaling]
\label{lem:ags_convergence}
Assume that the objective function $f$ is $L$-smooth and that stochastic gradients computed using large batches are unbiased with bounded variance, i.e.,
\begin{equation}
\mathbb{E}[g^{(i)} \mid w^{(i)}] = \nabla f(w^{(i)}),
\qquad
\mathbb{E}\|g^{(i)} - \nabla f(w^{(i)})\|^{2} \le \sigma^{2}.
\end{equation}
Let the learning-rate satisfy $\eta$ \scalebox{0.85}{\textgreater} $0$, and let \acrshort{ags} enforce bounded per-parameter scaling factors such that for every iteration $i$ and model parameter $p$,
\begin{equation}
0 < s_{min} < s_{p}^{(i)} \le s_{max}
\end{equation}
If the learning-rate satisfies
\begin{equation}
\eta < \frac{2s_{min}}{L s_{max}^{2}},
\end{equation}
then the iterates generated by \acrshort{ags} satisfy
\begin{equation}
\frac{1}{I} \sum_{i=1}^{I} \mathbb{E}\|\nabla f(w^{(i)})\|^{2}
\le
\frac{f(w^{(1)}) - f^{\ast}}{\epsilon I}
+ \frac{L\eta^{2}s_{max}^{2}\sigma^{2}}{2\epsilon},
\end{equation}
where $f^{\ast} = \inf f$ and
\begin{equation}
\epsilon := \eta s_{min} - \frac{L\eta^{2}s_{max}^{2}}{2} > 0.
\end{equation}
Consequently, \acrshort{ags} converges to a stationary point at rate $\mathcal{O}(1/I)$ for constant $\eta$, and at rate $\mathcal{O}(1/\sqrt{I})$ when $\eta \propto 1/\sqrt{I}$.
\end{lemma}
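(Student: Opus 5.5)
The proof will be the standard descent-lemma argument for SGD, applied to the preconditioned update. Write the AGS step as $w^{(i+1)} = w^{(i)} - \eta S^{(i)} g^{(i)}$, where $S^{(i)}$ is the diagonal matrix of per-parameter scaling factors. By assumption its diagonal entries lie in $(s_{min}, s_{max}]$.

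The unscaled branch of Algorithm~(\ref{algo1adapgrads}) corresponds to $S^{(i)}=I$. It is covered by the hypothesis only if $s_{min}<1\le s_{max}$, and I will state this explicitly. I also need $S^{(i)}$ to be measurable with respect to $\mathcal{F}_i=\sigma(w^{(1)},\dots,w^{(i)})$, so that conditioning on $w^{(i)}$ leaves $g^{(i)}$ unbiased while $S^{(i)}$ is fixed. For the \textsc{UpdateGradScale} factors this holds, since they are refreshed once per epoch from past data.

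The branch selection is the delicate point. It depends on $\Delta(G^{(i)})$, which is computed from the current gradient, so $S^{(i)}$ may be correlated with $g^{(i)}$. I expect this to be the main obstacle. My first attempt is to assume the sensitivity decision uses the previous iterate's statistic, so that it is $\mathcal{F}_i$-measurable. If the selection is left as written, I would bound the cross term adversarially instead. That route incurs an extra $\sigma$-dependent term, which can be absorbed only under the stated constants if one accepts a slightly weaker $\epsilon$.

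With measurability in place, the key steps run in order. First apply $L$-smoothness (Equation~(\ref{eqn:Lsmooth})) with $x=w^{(i+1)}$ and $y=w^{(i)}$:
\begin{equation*}
f(w^{(i+1)}) \le f(w^{(i)}) - \eta\langle \nabla f(w^{(i)}), S^{(i)} g^{(i)}\rangle + \frac{L\eta^{2}}{2}\|S^{(i)} g^{(i)}\|^{2}.
\end{equation*}
Then take the conditional expectation. Unbiasedness gives
\begin{equation*}
\mathbb{E}\langle \nabla f, S g\rangle = \langle \nabla f, S\nabla f\rangle \ge s_{min}\|\nabla f\|^{2}.
\end{equation*}
The bias--variance split, with the cross term vanishing, gives
\begin{equation*}
\mathbb{E}\|Sg\|^{2} \le s_{max}^{2}\bigl(\|\nabla f\|^{2}+\sigma^{2}\bigr).
\end{equation*}
Combining these yields
\begin{equation*}
\mathbb{E}[f(w^{(i+1)})\mid\mathcal{F}_i] \le f(w^{(i)}) - \epsilon\|\nabla f(w^{(i)})\|^{2} + \frac{L\eta^{2}s_{max}^{2}\sigma^{2}}{2},
\end{equation*}
with $\epsilon=\eta s_{min}-L\eta^{2}s_{max}^{2}/2$. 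This $\epsilon$ is positive exactly when $\eta<2s_{min}/(Ls_{max}^{2})$.

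Next take total expectations, sum over $i=1,\dots,I$, and telescope. Use $f(w^{(I+1)})\ge f^{\ast}$ and divide by $\epsilon I$. This gives exactly the displayed inequality in Lemma~\ref{lem:ags_convergence}.

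For the rate statements I will interpret the constant-$\eta$ claim carefully. The optimization term decays as $\mathcal{O}(1/I)$, while the variance term $L\eta^{2}s_{max}^{2}\sigma^{2}/(2\epsilon)=\mathcal{O}(\eta)$ is a fixed noise floor. So the correct reading is convergence at rate $\mathcal{O}(1/I)$ to an $\mathcal{O}(\eta\sigma^{2})$-neighbourhood of stationarity, or exact $\mathcal{O}(1/I)$ when $\sigma=0$.

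For $\eta=c/\sqrt{I}$, with $I$ large enough that $L\eta s_{max}^{2}\le s_{min}$, we get $\epsilon\ge \eta s_{min}/2$. The first term is then at most $2(f(w^{(1)})-f^{\ast})/(c\,s_{min}\sqrt{I})$, and the second at most $Lc\,s_{max}^{2}\sigma^{2}/(s_{min}\sqrt{I})$. Both are $\mathcal{O}(1/\sqrt{I})$, matching SGD.
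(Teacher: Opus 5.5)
Your proposal is correct and follows the paper's proof essentially line for line: the smoothness inequality, the $s_{min}$/$s_{max}$ bounds on the two terms, the variance split, then telescoping and dividing by $\epsilon I$. Your $\mathcal{F}_i$-measurability assumption is exactly what the paper invokes when it says $s^{(i)}$ is deterministic relative to the batch randomness, and your remarks on the unscaled branch and on reading the constant-$\eta$ rate as convergence to an $\mathcal{O}(\eta\sigma^{2})$ neighbourhood are accurate refinements the paper leaves implicit.

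One caution on your fallback remark. If the branch choice stays correlated with $g^{(i)}$, the cross term contributes an additive term of order $\eta\,\sigma\|\nabla f(w^{(i)})\|$, which is linear rather than quadratic in $\eta$. Young's inequality can only convert this into an $\eta$-independent $\mathcal{O}(\sigma^{2})$ floor, so that route does not recover the stated bound, or the $\mathcal{O}(1/\sqrt{I})$ rate, merely by weakening $\epsilon$.
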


\begin{proof}
Since $f$ is $L$-smooth, the update rule of \acrshort{ags},
\begin{equation}
w^{(i+1)} = w^{(i)} - \eta \, s^{(i)} \odot g^{(i)},
\end{equation}
satisfies
\begin{equation}
f(w^{(i+1)}) \le f(w^{(i)})
- \eta \langle \nabla f(w^{(i)}), s^{(i)} \odot g^{(i)} \rangle
+ \frac{L\eta^{2}}{2}\| s^{(i)} \odot g^{(i)} \|^{2}.
\end{equation}

Taking conditional expectation with respect to $w^{(i)}$ and noting that $s^{(i)}$ is deterministic relative to the batch randomness yields
\begin{equation}
\begin{aligned}
\mathbb{E}[f(w^{(i+1)}) \mid w^{(i)}]
\;\le\;& f(w^{(i)}) - \eta \,\langle \nabla f(w^{(i)}),s^{(i)} \odot \nabla f(w^{(i)}) \rangle \\
&+ \frac{L\eta^{2}}{2}
\,\mathbb{E}\!\left\|
s^{(i)} \odot g^{(i)}
\right\|^{2}.
\end{aligned}
\end{equation}

Using the boundedness of the scaling factors, we have
\begin{equation}
\langle \nabla f, s^{(i)} \odot \nabla f \rangle \ge s_{min}\|\nabla f\|^{2},
\;\;
\| s^{(i)} \odot g^{(i)} \|^{2} \le s_{max}^{2}\|g^{(i)}\|^{2}.
\end{equation}

Moreover, by the variance assumption,
\begin{equation}
\begin{aligned}
\mathbb{E}\|g^{(i)}\|^{2}
=\;& \|\nabla f(w^{(i)})\|^{2} + \mathbb{E}\!\left\|
g^{(i)} - \nabla f(w^{(i)})
\right\|^{2} \\
\le\;& \|\nabla f(w^{(i)})\|^{2}
+ \sigma^{2}.
\end{aligned}
\end{equation}

Substituting these bounds yields
\begin{equation}
\begin{aligned}
\mathbb{E}[f(w^{(i+1)}) \mid w^{(i)}]
\le\;& f(w^{(i)}) - \eta s_{min}\|\nabla f(w^{(i)})\|^{2} \\
&+ \frac{L\eta^{2}s_{max}^{2}}{2}
\Big(
\|\nabla f(w^{(i)})\|^{2}
+ \sigma^{2}
\Big).
\end{aligned}
\end{equation}

Rearranging terms gives us
\begin{equation}
\begin{aligned}
\Big(
\eta s_{min}
- \frac{L\eta^{2}s_{max}^{2}}{2}
\Big)
\|\nabla f(w^{(i)})\|^{2}
\le\;&
f(w^{(i)}) \\
&- \mathbb{E}[f(w^{(i+1)}) \mid w^{(i)}] \\
&+ \frac{L\eta^{2}s_{max}^{2}\sigma^{2}}{2}.
\end{aligned}
\end{equation}

Choosing $\eta < 2s_{min}/(Ls_{max}^{2})$ ensures the coefficient on the left-hand side is positive.
Setting
\begin{equation}
\epsilon := \eta s_{min} - \frac{L\eta^{2}s_{max}^{2}}{2},
\end{equation}
and taking full expectation and summing over $I$ iterations yields
\begin{equation}
\epsilon \sum_{i=1}^{I} \mathbb{E}\|\nabla f(w^{(i)})\|^{2}
\le
f(w^{(1)}) - \mathbb{E}[f(w^{(I)})]
+ \frac{IL\eta^{2}s_{max}^{2}\sigma^{2}}{2}.
\end{equation}

Setting $f^{\ast} = \inf f$ and dividing by $I\epsilon$ completes the proof.
\end{proof}

	\section{Evaluation}\label{sec:eval}

\subsection{Implementation}
We implement \gls{tula} over PyTorch v2.6.0, with its scaling indicators realized by extending the PyTorch \acrfull{ddp} module.
The \textsl{TulaOptimizer} extends the base \textsl{torch.optim.Optimizer} class to wrap any standard optimizer, and tracks gradient norms across steps and computes gradient variability metric in an online manner.

During the initial exploratory phase, an external model service profiles training runs and collects scaling indicators to construct a performance model.
By default, \gls{tula} employs a partial-search strategy due to its lower overhead and comparable accuracy to full-search strategy (detailed in \S\ref{subsec:perfmodeloptbsz}).
Once an optimal configuration is chosen, training proceeds using large batches with adaptive gradient scaling (AGS).
Scaling factors are updated after every epoch by default, although this frequency is configurable.
\gls{ags} in \gls{tula} is parameterized by a gradient variability threshold $\delta$ and a small reference batch-size used to approximate small-batch updates.

\begin{figure*}
	\centering 
	\subfloat[ResNet50]{\includegraphics[width=0.25\textwidth]{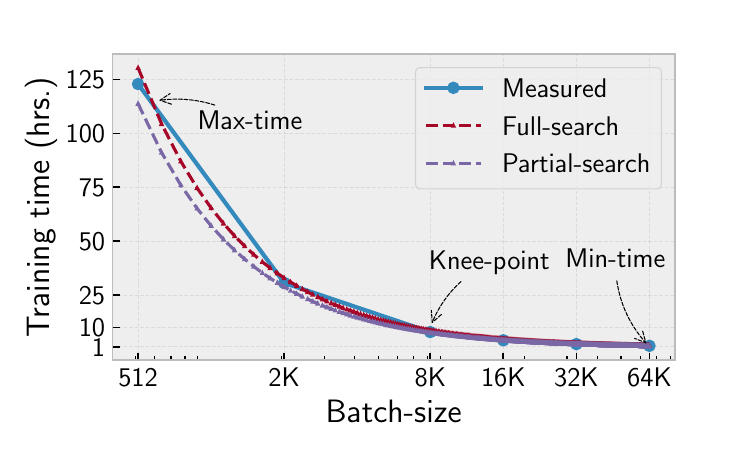}
	\label{fig9eptimeres50}}
	\subfloat[VGG11]{\includegraphics[width=0.25\textwidth]{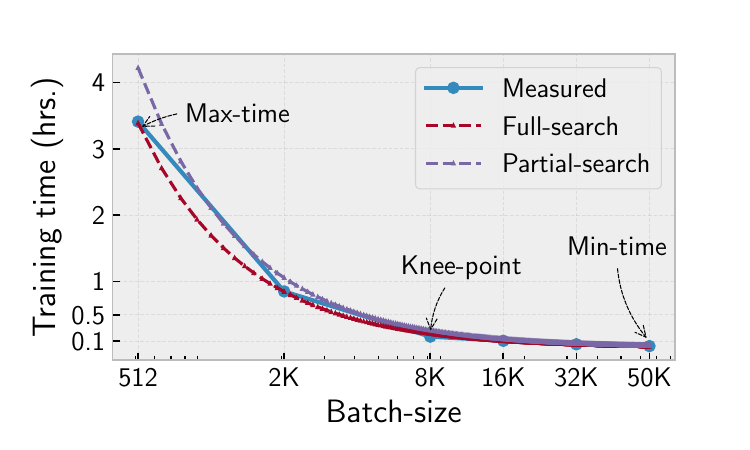}
	\label{fig9eptimevgg11}}
	\subfloat[AlexNet]{\includegraphics[width=0.25\textwidth]{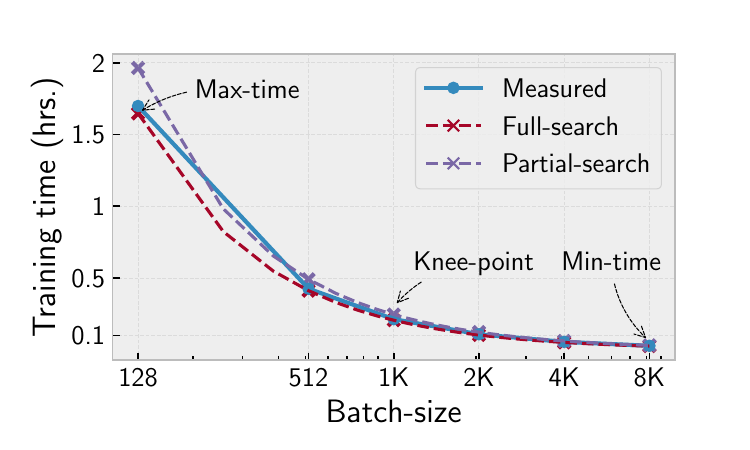}
	\label{fig9eptimealex}}
	\subfloat[MobileNetv3]{\includegraphics[width=0.25\textwidth]{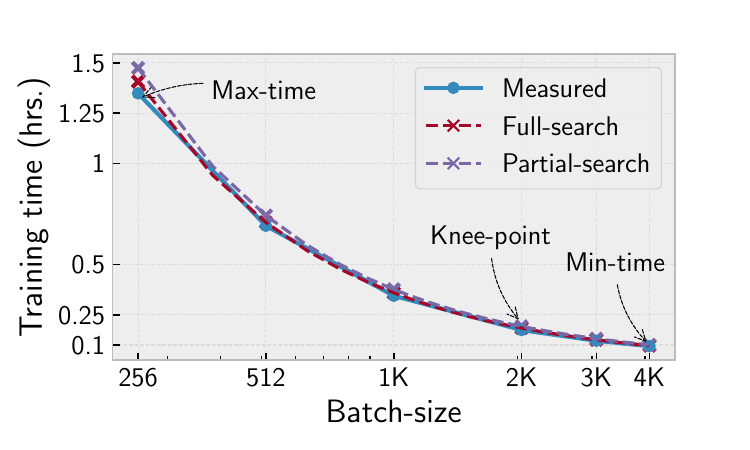}
	\label{fig9eptimemobv3}}
	\caption{Training time vs. batch-size on a 16-node cluster.
	Dashed lines denote predictions from performance models built using full and partial-search.
	Both strategies accurately capture runtime trends and identify near-optimal batch-size.
	%With a static pricing model, minimum cost can be achieved with the longest running time, or vice-versa.
	%The knee-point balances the time-cost trade-offs corresponding to diminishing returns with larger batches.}
	Larger batches offer diminishing returns; training time does not significantly reduce from the knee-point to the minimum-time configuration at the largest batch-size.
	However, the latter suffers considerably more generalization loss.}
	\label{fig9EpochTimeCurves}
\end{figure*}

\subsection{Experimental setup}
We run experiments with varying configurations on up to 16 nodes, with each node having a 48-core Intel Xeon E5-2560 CPU, 128\,GB of memory, and 1 V100S GPU with 32\,GB memory running Ubuntu~22.04.
The nodes are connected via 10Gbps Ethernet and communicate using NCCL v2.21.5 with setting \textsl{NCCL\_ALGO} = \textsl{ring}.
%Although these GPUs are old in the steep development pace of AI/ML, the performance characteristics exhibited here applies to modern devices as well.
%Similarly, bandwidth cost have significantly improved in modern high-speed interconnects such as NVLink, but so has the size of \gls{dnn}s, rendering parallel efficiency and communication as a major distributed training bottleneck.
Although the GPUs used in our experiments are relatively dated given the rapid evolution of AI/ML hardware, with modern accelerators featuring tensor-optimized execution units, high-bandwidth memory, and tightly integrated interconnect fabrics.
While raw compute and memory bandwidth have increased by orders of magnitude, the rapid growth in model scale has preserved communication and memory movement as dominant system bottlenecks in distributed training.
%However, the performance trends observed remain representative of modern accelerators for deep learning workloads.
%Likewise, while communication bandwidth has improved substantially with high-speed interconnects such as NVLink, the scale of deep neural networks (DNNs) has grown proportionally. As a result, communication overhead and parallel efficiency continue to pose significant bottlenecks in distributed training.

We evaluate four models: ResNet50, VGG11, AlexNet and MobileNetv3 over ImageNet-1K, CIFAR100, CalTech101 and CalTech256.
These datasets contain 1.2M, 50K, 9K and 30K images respectively.
ResNet50 on ImageNet reports top-5 while all other \acrshort{dnn}s report top-1 test accuracy.
The models train with cross-entropy loss, \acrshort{sgd} with momentum factor 0.9, initial \gls{lr} of 0.05 in ResNet50 and 0.01 across others.
MobileNetv3 applied weight decay 0.0001, while others used decay factor 0.0005.
The training schedule decays \gls{lr} by 10$\times$ in ResNet50 after [100,\,150,\,200] epochs, AlexNet after [25,\,50,\,75] epochs, and MobileNetv3 after [40,\,80] epochs.
The \gls{lr} in VGG11 is decayed by 5$\times$ after [50,\,75,\,100] epochs respectively.

Across all models, the search space for the performance model sets the minimum and maximum cluster-size to 2 and 16, with candidate configurations increasing exponentially, i.e., [2,\,4,\,8,\,16].
Batch-sizes are bounded with a minimum of 32, while the upper bound is determined by the memory estimation model.
Candidate batches also grow exponentially within this range.
Under full-search, each candidate configuration is briefly profiled, whereas partial-search evaluates only two points (the extremums of batch and cluster-size).

Since memory estimation can be conservative, \gls{tula} may overestimate the largest feasible batch-size and encounter \gls{oom} errors.
\GLS{tula} handles this by frequent job checkpointing and relaunching progressively smaller batches until a feasible configuration is found (akin to PyTorch Lightning's \textsl{auto\_scale\_batch\_size} feature).
Thus, \gls{tula} first relies on its memory model and, if necessary, refines the estimate through runtime probing.
Once the performance model is built and an optimal batch-size chosen based on user objective (minimizing time, cost, or identifying the knee-point), \gls{tula} deploys training with \acrshort{ags} (setting $\delta$ = 0.5 in current experiments) to improve its large-batch generalization quality.

\begin{figure*}
	\centering 
	\subfloat[ResNet50]{\includegraphics[width=0.25\textwidth]{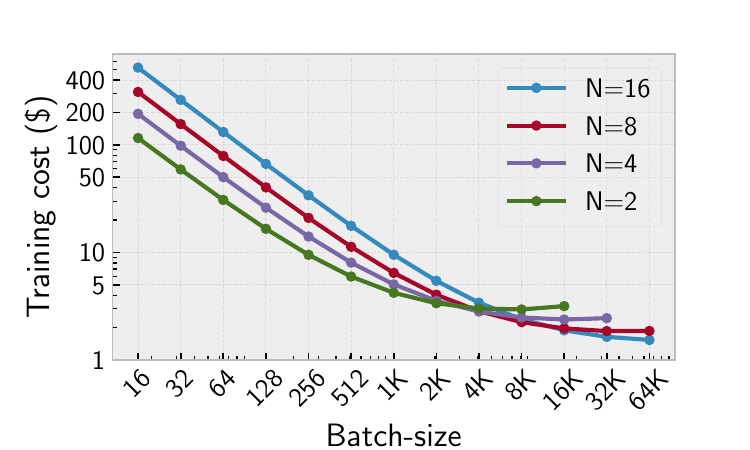}
	\label{fig11memcostres50}}
	\subfloat[VGG11]{\includegraphics[width=0.25\textwidth]{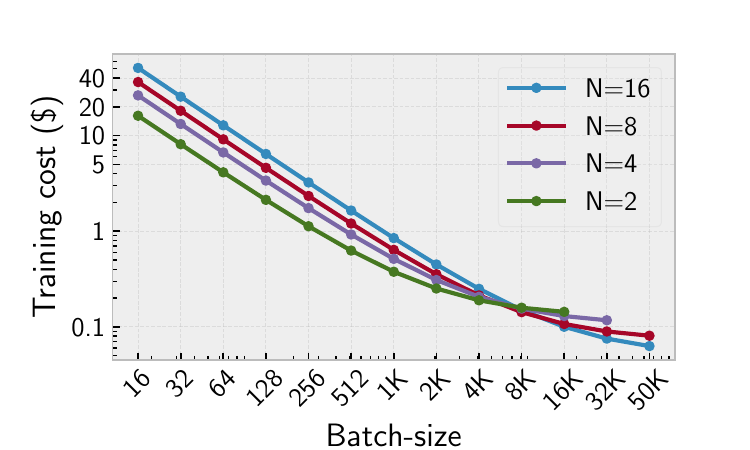}
	\label{fig11memcostvgg11}}
	\subfloat[AlexNet]{\includegraphics[width=0.25\textwidth]{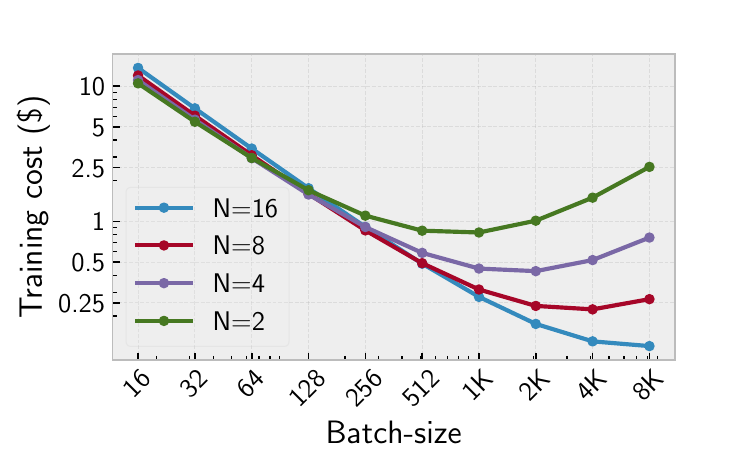}
	\label{fig11memcostalex}}
	\subfloat[MobileNetv3]{\includegraphics[width=0.25\textwidth]{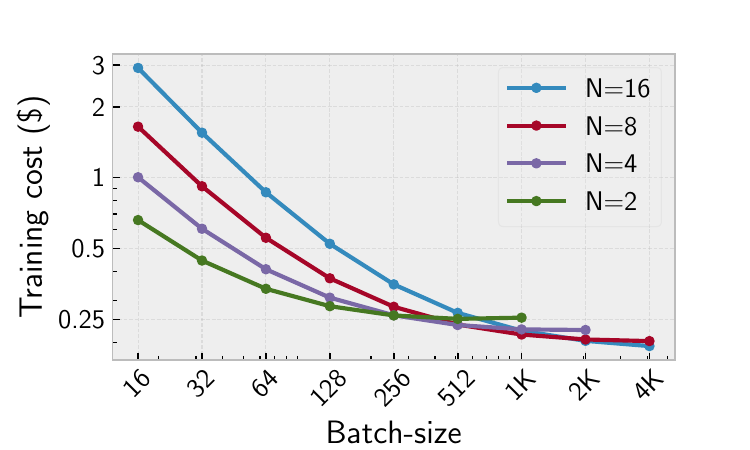}
	\label{fig11memcostmobv3}}
	\caption{Training cost under a memory-based pricing model that charges \$0.15/hour/GB of GPU memory usage.
	The optimal batch-size shifts under this cost model due to increased memory footprint on larger batches.}
	\label{fig11memcosteval}
\end{figure*}

\subsection{Performance Model and Optimal Batch-Size}\label{subsec:perfmodeloptbsz}

\subsubsection{Estimating training time and cost:} Using the performance model described in \S\ref{subsec:designparperf}, we estimate the time and cost of distributed training across a range of global batch-sizes.
Figure~(\ref{fig9EpochTimeCurves}) reports the measured training time along with predictions obtained from performance models constructed via full- and partial-search.
For all \acrshort{dnn}s, the predicted curves closely track the observed runtime trends as batch-size increases.
%Full-search model is developed by briefly deploying all possible configurations in the search space, capturing performance metrics and fitting the model over multiple data-points logged during the profiling phase.
%On the other hand, partial-search model is fitted over the extremums, i.e., the minimum and maximum batches selected apriori.
The full-search model is constructed by exhaustively sampling all configurations within the search space, collecting performance metrics for each, and fitting the model using the multiple data points recorded during the profiling phase.
In contrast, the partial-search model is trained using only the boundary configurations—specifically, the minimum and maximum batch sizes selected a priori—and fitting over these extremal points.
%Due to the additional data available to full-search strategy, its prediction (shown in red) is closer to the ground truth (shown in blue) in Figure~(\ref{fig9EpochTimeCurves}).
Owing to the richer set of profiling data available to the full-search strategy, its predictions (red) more closely aligns with the ground truth (blue) in Figure~(\ref{fig9EpochTimeCurves}).
While full-search yields slightly more accurate estimates, partial-search achieves comparable accuracy with substantially lower profiling overhead, as it evaluates only the boundary configurations.
%As we approach batch ceilings (right side of the curve), the estimated time/cost of both strategies saturates to nearly the same scale.
As the batch-size approaches its upper bound (i.e., the rightmost region of the curve), the estimated time or cost predicted by both strategies converges and saturates at nearly the same level.
%% SAHILLLLLLLLL: add partial/full search discussion here!

Assuming a fixed pricing model (see Equation~(\ref{eqn:trainCostEst})), training cost follows the same trajectory as execution time.
This reflects common cloud pricing practices where compute resources are charged at a constant hourly rate.
For example, a Google Cloud virtual machine with a single NVIDIA V100 GPU costs \$2.48 per hour in the US region in 2025.
Under such a pricing model, \gls{tula} enables users to select the batch-size that minimizes training cost or identify the knee-point beyond which increasing the batch-size yields diminishing returns in terms of time savings.
For the 16-node configuration, this knee-point corresponded to a batch-size of 8192 for ResNet50 and VGG11, 1024 for AlexNet and 2048 for MobileNetv3.
Beyond these points, further increases in batch-size does not significantly reduce training time/cost, but may further exacerbate the generalization gap if the chosen batch-size is too large.

\subsubsection{Memory-based pricing model:} Training cost is ultimately dictated by the underlying pricing model, which may depend on resource utilization rather than the available node count.
To evaluate such scenarios, we consider an alternative pricing strategy in which \gls{gpu}s are charged based on their memory usage.
Under this cost model, a V100 \gls{gpu} instance incurs cost $\$0.15 / \text{hour} / \text{GB}$ of memory.

We construct the performance model using partial-search and estimate the cost of training as a function of batch-size under memory-based pricing for cluster-size configurations [2,\,4,\,8,\,16].
Figure~(\ref{fig11memcosteval}) illustrates the resulting cost surfaces.
In contrast to fixed-rate pricing, the optimal batch-size shifts due to the increased memory footprint associated with larger batches.
At 16 nodes, the cost-optimal batch size is 16384 for ResNet50 and VGG11, 2048 for AlexNet, and 1024 for MobileNetv3.
In some cases like AlexNet (Figure~(\ref{fig11memcostalex})), cost may increase at larger batches when additional activation memory cost outweighs the savings from allocating fewer nodes.
Thus, training on a 16 node cluster becomes cheaper than training on 2, 4 or 8 for batches 1024 or greater.
%outweighs gains in parallel efficiency.
These results highlight the importance of jointly considering system performance, memory usage, and pricing models when selecting the batch-size in distributed training.

\begin{figure}
    \centering
    \hspace*{-0.3cm}% adjust this value to move more/less to the left
    \subfloat[Memory prediction error]{
        \includegraphics[width=0.25\textwidth]{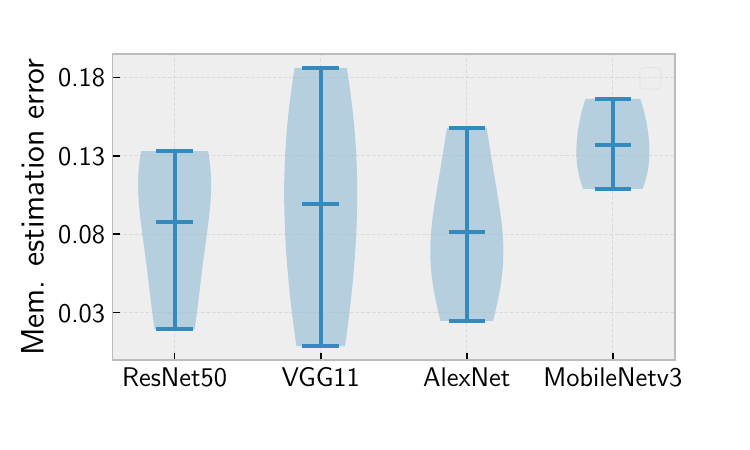}
        \label{fig10memprederr}}
    \subfloat[Time prediction error]{
        \includegraphics[width=0.25\textwidth]{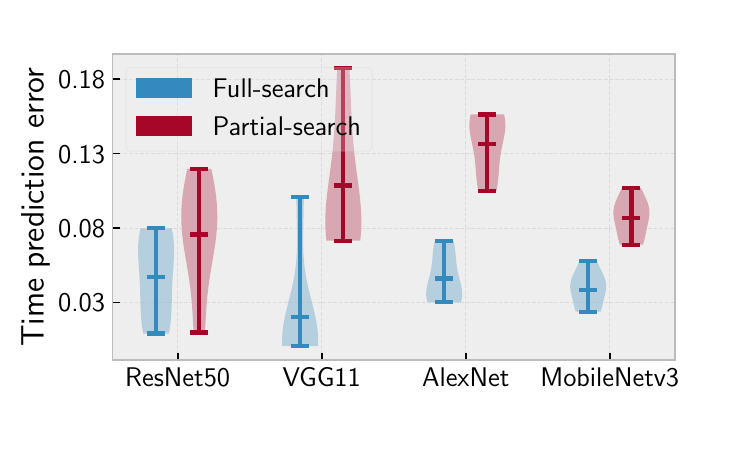}
        \label{fig10timeprederr}}
    \caption{Relative error of \gls{tula} in estimating \gls{gpu} memory and training time.}
    \label{fig10prederr}
\end{figure}

\subsubsection{Estimation accuracy:} The accuracy of \gls{tula}'s performance models is evaluated by comparing predicted memory usage and training time against observed values.
Figure~(\ref{fig10memprederr}) reports the relative error of the memory estimation model across all batch-sizes explored during search phase (shown here for all local batches corresponding to $N$ = 16 from Figures~(\ref{fig9EpochTimeCurves}) and (\ref{fig11memcosteval})).
\emph{Across all the evaluated neural networks, the median memory prediction error ranges only between 8--14\%}.
When estimation error does indeed result in out-of-memory failures, \gls{tula} automatically falls back to the next feasible configuration and redeploys the job, ensuring robustness in practice.

Figure~(\ref{fig10timeprederr}) shows the relative error in training-time prediction for models constructed using full and partial-search strategies.
Full-search achieves higher accuracy (and lower error) due to more extensive profiling across configurations, \emph{yielding median errors below 2--5\%}.
Partial-search, which profiles only the boundary configurations, \emph{incurs higher median error but remains within 7.5--14\% across all models}.
Despite its slightly lower accuracy, partial-search reliably captures relative performance trends across batch and cluster configurations.

It is important to note that \gls{tula} is designed to identify configurations that optimize training time, cost, or knee-point behavior rather than to minimize absolute prediction error.
Our results show that, even with partial-search, \gls{tula} consistently identifies the ideal configuration that optimizes for parallel efficiency under different scaling strategies and objectives.

\subsection{Statistical Performance with Large-Batches}\label{sub:statperfeval}

\begin{table}
\centering
\caption{Top-1 test accuracy with vanilla large-batch training (LBT), LARS, Post-local SGD (PoLo), \gls{lr} scaling (LRS), and \gls{tula}'s \gls{ags}.
For every batch-size, \textcolor{green!60!black}{\textbf{green}} $\rightarrow$ high, 
\textcolor{red!60!black}{\textbf{red}} $\rightarrow$ low, 
\textcolor{yellow!60!black}{\textbf{yellow}} $\rightarrow$ intermediate accuracy.}
\label{table:larbatcheval}
\resizebox{\columnwidth}{!}{
\begin{tabular}{|c|c|c|c|c|c|c|}
    \hline
    \bfseries Model & \bfseries Batch-size 
    & \multicolumn{5}{c|}{\bfseries Test accuracy (\%)} \\
    \cline{3-7}
    & & \bfseries LBT & \bfseries LARS & \bfseries PoLo & \bfseries LRS & \bfseries \emph{\gls{tula}} \\
    \hline

    \multirow{3}{*}{ResNet50} 
    & 1024 
    & \cellcolor{mid}79.67 
    & \cellcolor{mid}83.26 
    & \cellcolor{worst}74.03 
    & \cellcolor{mid}83.33 
    & \cellcolor{best}\emph{85.28} \\
    \cline{2-7}

    & 2048 
    & \cellcolor{mid}75.55 
    & \cellcolor{best}85.02
    & \cellcolor{worst}65.58 
    & \cellcolor{mid}74.44 
    & \cellcolor{mid}\emph{84.58} \\
    \cline{2-7}

    & 4096 
    & \cellcolor{mid}64.72 
    & \cellcolor{best}86.77 
    & \cellcolor{worst}62.46 
    & \cellcolor{mid}70.11 
    & \cellcolor{mid}\emph{82.16} \\
    \hline

    \multirow{3}{*}{VGG11} 
    & 1024 
    & \cellcolor{mid}83.30 
    & \cellcolor{mid}52.93 
    & \cellcolor{mid}82.90 
    & \cellcolor{worst}7.50 
    & \cellcolor{best}\emph{84.42} \\
    \cline{2-7}

    & 2048 
    & \cellcolor{mid}75.92 
    & \cellcolor{mid}50.91 
    & \cellcolor{mid}81.13 
    & \cellcolor{worst}5.00 
    & \cellcolor{best}\emph{82.58} \\
    \cline{2-7}

    & 4096 
    & \cellcolor{mid}70.24 
    & \cellcolor{mid}48.64 
    & \cellcolor{mid}79.50 
    & \cellcolor{worst}5.00 
    & \cellcolor{best}\emph{81.14} \\
    \hline

    \multirow{3}{*}{AlexNet} 
    & 256 
    & \cellcolor{mid}80.62 
    & \cellcolor{worst}55.40 
    & \cellcolor{best}85.11
    & \cellcolor{mid}83.38 
    & \cellcolor{mid}\emph{84.44} \\
    \cline{2-7}

    & 512 
    & \cellcolor{mid}72.80 
    & \cellcolor{worst}56.00 
    & \cellcolor{best}85.95
    & \cellcolor{mid}72.30 
    & \cellcolor{mid}\emph{83.00} \\
    \cline{2-7}

    & 1024 
    & \cellcolor{mid}62.60 
    & \cellcolor{mid}57.00 
    & \cellcolor{best}85.67
    & \cellcolor{worst}32.74 
    & \cellcolor{mid}\emph{81.20} \\
    \hline

    \multirow{3}{*}{MobileNetv3} 
    & 1024 
    & \cellcolor{mid}76.80 
    & \cellcolor{worst}67.64 
    & \cellcolor{mid}81.14 
    & \cellcolor{mid}78.66 
    & \cellcolor{best}\emph{81.56} \\
    \cline{2-7}

    & 2048 
    & \cellcolor{mid}71.15 
    & \cellcolor{mid}72.82 
    & \cellcolor{mid}75.54 
    & \cellcolor{worst}10.00 
    & \cellcolor{best}\emph{79.68} \\
    \cline{2-7}

    & 4096 
    & \cellcolor{mid}66.48 
    & \cellcolor{mid}74.29 
    & \cellcolor{mid}73.91 
    & \cellcolor{worst}5.00 
    & \cellcolor{best}\emph{75.70} \\
    \hline
\end{tabular}
}
\end{table}

\subsubsection{Evaluation methodology:}
From the performance models in Figure~(\ref{fig9EpochTimeCurves}), \gls{tula} identifies knee-points on the time--batch-size trade-off curves as the optimal configuration.
%This corresponds to 4K for ResNet50 and VGG11, 1K for AlexNet and 2K for MobileNetv3.
\emph{In this section, we evaluate convergence quality of models over batches around these knee-points.
Beyond these points, further batch scaling provides diminishing returns in time or cost reduction}.
%Therefore, we evaluate the statistical performance around these batches}.

We compare \gls{tula}'s \gls{ags} against popular techniques: vanilla large-batch training (LBT), learning-rate scaling (LRS)~\cite{b11}, Post-local SGD (PoLo)~\cite{b51}, and LARS~\cite{b14}.
LRS and LARS apply linear learning-rate scaling relative to a base batch-size of 128, while PoLo switches from synchronous to local-\gls{sgd} after 5 epochs, aggregating updates 4 times every epoch.
All these methods apply \gls{lr} warmup for 5 epochs.

\subsubsection{Results:}
Table~(\ref{table:larbatcheval}) shows that \gls{tula} consistently mitigates the generalization degradation observed with vanilla large-batch training (LBT) across all models and batch-sizes.
In LBT, test accuracy deteriorates in correspondence to larger global batches on account of the generalization gap in large-batch training.
In VGG11 and MobileNet, \acrshort{ags} saw the best accuracy across all evaluated batches.
For ResNet50 and AlexNet, \gls{ags} remains competitive and avoids severe accuracy degradation at larger batches, although LARS and PoLo observed slightly higher accuracy.
%In contrast, LRS exhibits unstable behavior at large batch-sizes, with accuracy degrading significantly as the scaled \gls{lr} becomes excessively large.
%The scaling factor in LRS is directly proportional to the ratio of large and small batch-size of reference, thus scaling to a very large batch with a small base batch-size can result in a very high scaling factor.
%Additionally, unlike \gls{ags}, LRS applies this scaling factor indiscriminately on every iteration, thus destabilizing the training phase.
In contrast, LRS demonstrates unstable behavior at large batches, with accuracy deteriorating markedly as the scaled \gls{lr} grows excessively large. The scaling factor in LRS is directly proportional to the ratio between the target large batch and the reference small batch-size; consequently, scaling to very large batches from a small base batch-size can produce an excessively high multiplier.
Moreover, unlike \gls{ags}, LRS applies this scaling factor uniformly at every iteration, regardless of the optimization landscape, which can further destabilize the training process (as specially seen in models like VGG11 and MobileNetv3).
Although LARS performed well on ResNet50, it did not converge well on other models.
This is because LARS (which scales the \gls{lr} by $||w||/||\nabla w||$) was originally developed for residual networks~\cite{b14}, while models like VGG11, AlexNet and MobileNetv3 have different characteristics.
Without residual paths, large per-layer scaling can significantly cause gradient instability, while depthwise layers (such as those in MobileNetv3) have very small parameter norms that can produce extreme scaling ratios in LARS, resulting in unstable \gls{lr} that destabilizes training.
In case of PoLo, synchronization frequency plays a crucial role in model convergence.
Reduced synchronization can lead to model drift, and converging local models to sharp minima even before model aggregation.
We thus see low to intermediate accuracy in most cases, with the exception of AlexNet.
We suspect that PoLo performed well here due to the model's small size with fewer deep non-linearity over a modest dataset like CalTech101.
In contrast, ResNet50 training over ImageNet saw least accuracy with PoLo as reduced synchronization deprived the model of beneficial stochasticity.

Overall, while existing techniques perform well only for specific models or batch regimes, \gls{tula}'s \acrshort{ags} provides robust and consistently high accuracy, making it well-suited for system-driven batch-size optimization in large-scale distributed training.

	\section{Related Work}\label{sec:relatedwork}

\gls{tula} lies at the intersection of distributed training systems and large-batch optimization.
Its design addresses two complementary challenges: (i) selecting the efficient batch-size and cluster configurations for distributed training on shared or cloud infrastructure~\cite{b52}, and (ii) mitigating the statistical degradation that often accompanies large-batch training.
The optimal batch-size depends jointly on the model, dataset, hardware configuration, and user objective (e.g., minimizing time, cost, or identifying a knee-point), thus motivating system-level approaches that reason across these dimensions.

Several prior systems jointly consider parallel efficiency and statistical behavior when training deep models at scale.
Pollux~\cite{b6} dynamically schedules training jobs on heterogeneous clusters based on performance modeling, while KungFu~\cite{b9} adapts resource allocation in response to observed training dynamics.
AdaScale SGD~\cite{b53} adjusts the effective \gls{lr} based on gradient variance to improve scaling efficiency.
In contrast to these approaches, \gls{tula} explicitly models batch-size effects on both performance and generalization, enabling principled batch selection prior to full-scale training.

Memory-aware training has also received significant attention.
ZeRO~\cite{b46} analyzes optimizer and model states to reduce memory footprint and enable larger batches, particularly for transformer models.
PyTorch Lightning’s \textsl{auto\_scale\_batch\_size} feature uses iterative probing to identify the largest batch-size that fits in \gls{gpu} memory.
\gls{tula}'s memory estimation and checkpoint-and-redeploy mechanism is complementary to these efforts, allowing it to recover gracefully from estimation errors while minimizing profiling overhead.
Prior work has also explored training sensitivity to reduce communication costs in distributed training~\cite{b41,b58,b39}.

A large body of work focuses on mitigating the generalization gap observed in large-batch training.
Common techniques include learning-rate scaling and warmup~\cite{b11}, batch and layer normalization, adaptive learning-rates, and gradient clipping.
\gls{lr} scaling (LRS) increases the learning-rate proportional to the batch-size, while other related methods co-tune batch-size and \gls{lr}~\cite{b48,b55}.
Other approaches introduce noise or extrapolation to avoid convergence to sharp minima~\cite{b47,b56}.
Post-local SGD (PoLo)~\cite{b51} begins with synchronous training and later switches to local updates to improve generalization at scale.
Layer-wise optimizers such as LARS~\cite{b14} and LAMB~\cite{b57} further adapt \gls{lr} based on layer-wise gradient and weight magnitudes.

These optimizations are orthogonal and complementary to \gls{tula}'s adaptive gradient scaling.
\gls{tula} provides a system-level framework for selecting optimal configurations and stabilizing training at scale, into which alternative large-batch optimization strategies can also be readily integrated.

	\section{Current Limitations and Discussion}\label{sec:limitations}

While \gls{tula} demonstrates strong improvements in both parallel and statistical efficiency, it has certain limitations that motivate future extensions.
\gls{tula}'s performance model assumes relatively stable communication characteristics.
In shared or congested networks, communication costs can vary significantly over time, potentially reducing prediction accuracy.
We plan to explore techniques to adapt the performance model to dynamic network conditions in the future.

While partial-search significantly reduces profiling cost over full-search, this overhead may still be non-trivial in environments with long job queue times or strict scheduling constraints, like shared \gls{hpc} systems.
Our evaluation considers only fixed-rate and memory-based pricing schemes.
In practice, cloud pricing can involve additional factors such as spot instances and heterogeneous hardware.
\gls{tula} does not currently incorporate such dynamic or multi-dimensional cost models.

The convergence behavior of \gls{tula} depends on hyperparameters of \gls{ags}, including the gradient variability threshold~$\delta$, the choice of reference small batch-size, and the frequency of per-parameter scaling update that compares small and large-batch gradients.
Setting extreme values on \gls{ags}, such as a very small reference batch-size and a very high large-batch could destabilize updates if the scaling factor is high (by making $s_{max}$ and $g_{small}$ in Algorithm~(\ref{algo1adapgrads}) very high).
The threshold $\delta$ in \gls{ags} governs whether updates are rescaled or applied without modification; setting it too high may unnecessarily scale large-batch updates, even when the underlying optimization landscape does not exhibit steep curvature or sharp gradients.
On the other hand, setting a very small $\delta$ may not scale updates at all, effectively acting as vanilla large-batch training.
Through empirical evaluation, we found $\delta$ = 0.5 to provide the best trade-off, and therefore adopt it as the default setting in our experiments.

While we observed that fixed parameter values are effective across all the evaluated models and datasets, these settings may require adjustment for different architectures, datasets, or optimization regimes.
In future, we plan to incorporate a checkpoint-based auto-tuning mechanism that monitors validation accuracy during training and prunes configurations that negatively impact convergence quality.

Our current evaluation focuses on convolutional models for vision workloads. Although \gls{tula} is architecturally model-agnostic, extending it to other domains like large language models (LLMs) and graph neural networks (GNNs) introduces additional challenges, including different scaling and generalization characteristics.
In particular, memory estimation for LLMs needs to account for factors beyond batch-size, such as sequence length, transformer depth, and hidden dimensions.
Additionally, layer heterogeneity in LLMs can exhibit varying gradient statistics.
For e.g., attention layers can have sharp curvature, layernorm parameters have smaller but sensitive gradients, while embedding layer can have highly sparse gradients~\cite{b71}.
Thus, it may not be intuitive to apply gradient scaling uniformly across all model layers in this case, as it may disturb layernorm statistics or over-amplify noisy embedding gradients.
Further, AdamW optimizer (popularly used in LLM training) combined with gradient scaling, would effectively modify the diagonal preconditioner and may lead to unexpected behavior.

On the other hand, gradient noise in GNN like architectures is graph structure dependent, so performing gradient scaling in a static way may amplify structural bias and degrade convergence quality.
%Furthermore, we plan to explore more expressive gradient-scaling strategies by leveraging auxiliary deep models (e.g., transformers or autoencoders) to guide \gls{ags}, similar in spirit to recent work on learning-based optimization~\cite{b69,b70}.
To address these challenges, we are exploring more expressive gradient-scaling strategies by leveraging auxiliary deep models (e.g., transformers or autoencoders), effectively acting as a knowledge distillation process (mapping large-to-small batch gradient space), similar to recent works on learning-based optimization~\cite{b69,b70}.

	\section{Conclusion}\label{sec:conclusion}

Training neural networks efficiently on distributed resources requires carefully balancing parallel efficiency, resource cost, and statistical performance.
These factors interact in complex and often non-obvious ways, making na\"ively choosing batch-size and cluster-size configuration sub-optimal.
\gls{tula} addresses this challenge by providing an online, profiling-driven service that models training performance across different configurations, identifies optimal operating points, and improves convergence quality in large-batch regimes.

Our evaluation shows that \gls{tula}'s parallel performance model accurately captures training time and cost trends, enabling it to identify the batch-size that best satisfies user-specified objective such as minimal training time or cost.
A more balanced and practical approach is to identify the batch-size corresponding to the knee-point of the pareto frontier.
Beyond this point, increasing the batch-size yields diminishing returns in efficiency while exacerbating the generalization gap.
By selecting a batch-size at or near this knee-point, \gls{tula} avoids unnecessary resource usage and degraded model quality.
Since the chosen batch-size can be prone to generalization gap relative to smaller training batches, the proposed \gls{ags} technique improves final model quality over vanilla large-batch training.

Across multiple vision workloads and convolutional models, batch configurations selected by \gls{tula} achieves substantial performance gains over na\"ively chosen settings, with up to 16$\times$ speedup observed in ResNet50, 20$\times$ in VGG11, 8$\times$ in AlexNet and 7.7$\times$ in MobileNetv3.
In addition, \gls{tula}'s adaptive gradient scaling (\acrshort{ags}) consistently mitigates the generalization gap associated with large-batch training, improving final test accuracy by an average of 8.8\% compared to vanilla large-batch baselines for the same effective batch-size.

	\section*{Acknowledgments}

This research used resources of the Oak Ridge Leadership Computing Facility at the Oak Ridge National Laboratory, which is supported by the Office of Science of the U.S. Department of Energy under Contract No. DE-AC05-00OR22725.

\end{document}